\documentclass[11pt,a4paper]{article}

\usepackage{graphicx} 

\usepackage[left=1in, right=1in, bottom=1.25in, top=1.5in]{geometry}

\usepackage{breakcites}

\usepackage{fullpage}
\usepackage{times}
\usepackage{soul}
\usepackage[utf8]{inputenc}
\usepackage[small]{caption}
\usepackage{graphicx}

\usepackage{url}            
\usepackage{booktabs}       
\usepackage{array}
\usepackage{makecell}
\usepackage{multirow}

\newcolumntype{L}[1]{>{\raggedright\arraybackslash}p{#1}}

\usepackage{nicefrac}       
\usepackage{microtype}      
\usepackage{amsmath,latexsym,bm,mathtools,amssymb,amsthm}
\usepackage{amsfonts}
\usepackage{relsize} 
\usepackage{caption,subcaption}
\usepackage{wrapfig}

\usepackage[vlined,ruled,linesnumbered]{algorithm2e} 
\SetCommentSty{small}

\newcounter{subrtn}

\makeatletter
\newif\ifalg@subroutine
\alg@subroutinefalse

\AtBeginEnvironment{algorithm}{%
  \ifalg@subroutine\else
    \setcounter{subrtn}{0}%
  \fi
}

\newenvironment{subroutine}[1][]{%
  \alg@subroutinetrue
  \stepcounter{subrtn}%
  \edef\algparent{\thealgocf}
  \begin{algorithm}[#1]%
    \addtocounter{algocf}{-1}
    \renewcommand{\thealgocf}{\algparent.\arabic{subrtn}}
}{%
  \end{algorithm}%
  \alg@subroutinefalse
}
\makeatother

\usepackage{amsthm} 
\usepackage{thmtools} 
\usepackage{thm-restate}

\usepackage{xcolor}         
\definecolor{pku-red}{RGB}{139,0,18}
\usepackage[ colorlinks,citecolor=blue,linkcolor=pku-red,urlcolor=pku-red,bookmarks=true]{hyperref}
\usepackage[nameinlink]{cleveref}

\usepackage{enumitem}

\usepackage{libertine}

\usepackage{tikz}  

\usepackage{bbm}

\usepackage[hyperref = true, natbib = true, style = alphabetic, isbn=false, doi=false, maxbibnames=999]{biblatex}

\AtEveryBibitem{
\clearlist{address}
\clearfield{volume}
 \clearlist{location}
 \clearfield{month}
 \clearfield{series}
 \clearfield{eprint}
 \clearfield{number}
 
 \ifentrytype{book}{}{
  \clearname{editor}
 }
}

\usepackage{tcolorbox}

\usepackage{authblk}

\usepackage{subcaption}

\usepackage{acronym}

\newtheorem{theorem}{Theorem}[section]
\newtheorem{corollary}[theorem]{Corollary}

\newtheorem{lemma}[theorem]{Lemma}

\theoremstyle{definition}
\newtheorem{definition}[theorem]{Definition}

\newtheorem{example}[theorem]{Example}

\newtheorem{fact}[theorem]{Fact}
\crefname{fact}{fact}{facts}

\providecommand{\email}[1]{\href{mailto:#1}{\nolinkurl{#1}\xspace}}

\Crefname{equation}{Eq.}{Eqs.}

\newcommand{\calE}{\mathcal{E}}

\newcommand{\calI}{\mathcal{I}}

\newcommand{\calK}{\mathcal{K}}

\newcommand{\calT}{\mathcal{T}}

\newcommand{\calV}{\mathcal{V}}

\newcommand{\bbE}{\mathbb{E}}

\newcommand{\bbR}{\mathbb{R}}

\newcommand{\bbZ}{\mathbb{Z}}

\newcommand{\E}{\mathbb{E}}
\newcommand{\reg}{\mathrm{Reg}}

\newcommand{\simplex}{\Delta_K}

\newcommand{\KL}[2]{\mathrm{KL}(#1\|#2)}

\acrodef{FTL}[FTL]{follow--the--leader}
\acrodef{EWOO}[EWOO]{exponentially weighted online optimization}
\acrodef{BM}[BM]{Blum--Mansour}
\title{Calibeating Made Simple}
\author{Yurong Chen$^{1}$, Zhiyi Huang$^{2}$, Michael I. Jordan$^{1,3}$, Haipeng Luo$^{4}$\\
    $^1$Inria, École Normale Supérieure, PSL Research University \\
    $^2$The University of Hong Kong  \\
    $^3$University of California, Berkeley \\
    $^4$University of Southern California \\
    \email{yurong.chen@inria.fr},
    \email{zhiyi@cs.hku.hk},\\
    \email{jordan@cs.berkeley.edu},
    \email{haipengl@usc.edu}
}
\date{}

\begin{document}

\maketitle

\begin{abstract}
We study calibeating, the problem of post-processing external forecasts online to minimize cumulative losses and match an informativeness-based benchmark. 
Unlike prior work, which analyzed calibeating for specific losses with specific arguments, we reduce calibeating to existing online learning techniques and obtain results for general proper losses.
More concretely,
we first show that calibeating is minimax-equivalent to regret minimization. 
This recovers the $O(\log T)$ calibeating rate of \citet{foster2023calibeating} for the Brier and log losses and its optimality,
and yields new optimal calibeating rates for mixable losses and general bounded losses.
Second, we prove that multi-calibeating is minimax-equivalent to the combination of calibeating and the  classical expert problem.
This yields new optimal multi-calibeating rates for mixable losses, including Brier and log losses, and general bounded losses.
Finally, we obtain new bounds for achieving calibeating and calibration simultaneously for the Brier loss.
For binary predictions, our result gives the first calibrated algorithm that at the same time also achieves the optimal $O(\log T)$ calibeating rate.
\end{abstract}

\section{Introduction}

\label{sec:introduction}

Calibration has attracted growing attention in recent years as a desideratum for probabilistic prediction, motivated by the need to produce reliable probabilities for downstream decision-making~\citep{guo2017on}. 
Despite its appeal as a benchmark for reliability, 
however, 
calibration is not necessarily a meaningful test of forecasting expertise.
For example, online calibration can be achieved 
by randomized strategies 
without any knowledge of the data-generating process~\citep{foster1998asymptotic}.
Hence, calibration alone cannot distinguish true expertise from uninformative procedures.

To quantify and preserve forecasting expertise, 
\citet{foster2023calibeating} introduced \emph{calibeating} in a post-processing setting. In this setting, an external forecaster (e.g., a machine learning model) outputs a probabilistic forecast at each round,  
and then the learner produces its own forecast based on it. 
It is known that for proper losses such as the Brier and log losses, the cumulative score can be decomposed into a calibration term, which measures the reliability, and a refinement term, which measures the informativeness and skill. This motivates the question of whether one can improve reliability without sacrificing skill. 
Calibeating formalizes this goal by requiring the learner's  
loss to be as small as the external forecaster's refinement score (in other words, to ``beat''
the forecaster by its calibration error).

Existing work~\citep{foster2023calibeating,lee2022online} establishes online calibeating guarantees for 
the Brier and log losses,
and studies extensions 
such as 
beating multiple forecasters (multi-calibeating) and imposing simultaneous calibration constraints. These results rely on loss-specialized
analyses. More broadly, the fundamental statistical difficulty of calibeating and its relationship to standard online-learning problems have remained unclear, leaving open whether known bounds are optimal or how they 
generalize beyond the Brier and log losses. 

\subsection{Our Results}

We study calibeating from an online-learning perspective. Rather than analyzing different losses on a case-by-case basis, we identify simple reductions from (multi-)calibeating to standard online-learning primitives. 
This yields a ``plug-and-play'' analysis:
by instantiating the reductions with classical online-learning algorithms, we obtain general upper and lower bounds in a modular way.

\paragraph{Calibeating $=$ no-regret learning (\Cref{sec: calibeating}).}
We prove that calibeating is minimax-equivalent to regret minimization. 
\Cref{thm:upper bound} gives a reduction that turns any no-regret learner with regret bound $\alpha(T)$ into a calibeating algorithm with a corresponding bound that scales with $|Q|$, the number of distinct 
external forecast values over $T$ rounds.
The reduction exploits the fact that the refinement benchmark decomposes across distinct forecast values, allowing one to treat each corresponding subsequence independently. 
Instantiating this reduction recovers the $O(|Q|\log T)$ guarantees for the Brier and log losses from \citet{foster2023calibeating} and extends them to general mixable losses (\Cref{coro:upper-bound-mixable}).
We also obtain an $O(\sqrt{|Q|T})$ bound for general bounded proper losses (\Cref{coro:upper-bound-bounded-loss}).
Conversely, \Cref{thm:lower bound} provides a matching lower bound that completes the minimax-equivalence.

\paragraph{Multi-calibeating $=$ calibeating $+$ expert problem (\Cref{sec: multi-calibeating}).}
Next, we present (in \Cref{thm:upper-bound-multi-calibeating})  a simple decomposition of multi-calibeating into calibeating and the expert problem: 
run a separate calibeating subroutine for each forecaster to produce candidate predictions, then aggregate them with an expert algorithm.
The resulting multi-calibeating guarantee is the sum of the calibeating bound and the expert regret bound.
For mixable losses,
we obtain a logarithmic bound of 
$O(\log N +  |Q^{(n)}|\log T)$ (\Cref{coro:upper-bound-multi-mixable}), where $N$ is the number of forecasters and $Q^{(n)}$ is the set of distinct forecasts produced by forecaster $n$.
This improves exponentially over the polynomial dependence on $N$ in \citet{foster2023calibeating} and the polynomial dependence on $T$ in \citet{lee2022online}. 
We complement this with a lower-bound reduction (\Cref{thm:lower bound-multi-calibeating}),  showing that multi-calibeating inherits hardness from both the expert problem and the calibeating problem. 
This yields matching lower bounds for Brier and log losses and shows the tightness of our results (\Cref{coro:lower-bound-multi-brier-log}). 

\paragraph{Simultaneous (multi-)calibeating and calibration for Brier loss (\Cref{sec:simultaneous-calibration-calibeating}).}
Finally, we provide new bounds for achieving calibeating and calibration simultaneously. 
We propose a meta-algorithm that tracks an arbitrary reference 
algorithm
while ensuring calibration (\Cref{thm:calibeating-calibration}).
The construction employs two existing online learning primitives:
the reduction by \cite{blum2007from} to enforce calibration via the calibration-swap-regret connection, and a two-expert algorithm by \citet{sani2014exploiting} to aggregate the predictions from the \ac{BM} reduction and the reference algorithm. 
Instantiating the reference with the (multi-)calibeating algorithms from the previous sections, for the Brier loss, we obtain for the binary case  the optimal logarithmic (multi-)calibeating rate of $O(\log N + |Q^{(n)}|\log T)$
while ensuring a sublinear $\ell_2$-calibration error of 
order 
$\tilde{O}(\sqrt{T})$ (\Cref{coro:binary-calibeating-calibration}).
This improves the polynomial $T$-dependence on the calibeating side 
in \citet{foster2023calibeating} and improves both sides compared to \citet{lee2022online}. 
For multi-class outcomes, we derive explicit tradeoffs between (multi-)calibeating and calibration (\Cref{coro:multi-class-calibeating-calibration}). In particular, at one extreme, we recover the known dependence on $T$ for calibration~\citep{foster2023calibeating,fishelson2025full} while dropping the $|Q^{(n)}|$ dependence in \citet{foster2023calibeating}.
For a summary of our results and comparisons with prior work, see \Cref{tab:main}.

\newlength{\settingwidth}
\settowidth{\settingwidth}{Simultaneous}
\newlength{\lossclass}
\settowidth{\lossclass}{Brier (multi-class)}
\newlength{\ours}
\settowidth{\ours}{$(O(\log T),\tilde O(\sqrt{T}))$\Cref{coro:binary-calibeating-calibration}}

\begin{table}[t]
\centering
\small
\caption{Comparison of prior and our guarantees  in $N$, $T$, $K$ (the number of outcomes), and $|Q|$ (we assume $|Q^{(n)}| = |Q|$ for simplicity). For simultaneous calibeating and calibration, the first rate is for calibeating and the second for calibration. We omit polynomial dependence on $K$ for presentation clarity, and $\tilde{O}$ omits logarithmic dependence on $T$. 
The simultaneous results of \citet{foster2023calibeating} are only for calibeating (but not multi-calibeating), so we only show results for calibeating for comparison. 
The results of \citet{lee2022online} are only for binary outcomes. }
\label{tab:main}
\resizebox{\textwidth}{!}{
\begin{tabular}{L{1.76cm} L{1.46cm} L{5.2cm} L{3.8cm}L{1.81cm}}
\toprule
Setting & Loss class & Prior work & \textbf{This paper} \\
\midrule
\multirow{4}{*}{Calibeating} & Mixable & -- & $\Theta(|Q|\log T)$ & (Cor.~\ref{coro:upper-bound-mixable}, \ref{coro:lower-bound-brier-log}) \\
							& ~ - Brier & $\Theta(|Q|\log T)$~\footnotemark[1]  & \\
							& ~ - Log & $O(|Q|\log T)$~\footnotemark[1] & \\[.5ex]
							& Bounded & -- & $\Theta(\sqrt{|Q|KT})$ & (Cor.~\ref{coro:upper-bound-bounded-loss}, \ref{coro:lower-bound-bounded-loss})\\
\midrule
\multirow{5}{*}{\makecell{Multi-\\Calibeating}} & Mixable & -- & $\Theta(\log N + |Q|\log T)$ & (Cor.~\ref{coro:upper-bound-multi-mixable}, \ref{coro:lower-bound-multi-brier-log}) \\
							& \multirow{3}{*}{~ - Brier} 	& $O((N+|Q|)\log T)$~\footnotemark[1] $O(\sqrt{NT}+|Q|\log T)$~\footnotemark[1] $\tilde{O}(\sqrt{|Q|}(\log N)^{\frac{1}{4}}T^{\frac{3}{4}} ) $~\footnotemark[2] & \\[2ex]
& Bounded & -- & $\Theta(\sqrt{T\log N} + \sqrt{|Q|KT})$ & (Cor.~\ref{coro:upper-bound-multi-bounded-loss}, \ref{coro:lower-bound-multi-bounded})\\
\midrule
\multirow{3}{*}{\makecell{Calibeating \\\& Calibration}}
& Brier & & \\
& ~ - binary
& $\tilde{O}(|Q|^{\frac{2}{3}}T^{\frac{1}{3}}), \tilde{O}(|Q|^{\frac{2}{3}}T^{\frac{1}{3}})$~\footnotemark[1] & $O(|Q|\log T), \tilde O(\sqrt{T})$ & (Cor.~\ref{coro:binary-calibeating-calibration})\\
&  ~ - $K$-class & $\tilde{O}(|Q|^{\frac{2}{K+1}}T^{\frac{K-1}{K+1}}), \tilde{O}(|Q|^{\frac{2}{K+1}}T^{\frac{K-1}{K+1}})$~\footnotemark[1] & $\tilde{O}  ( |Q|+  T^{\frac{K-1}{K+1}}), \tilde{O} (T^{\frac{K-1}{K+1}})$ & (Cor.~\ref{coro:multi-class-calibeating-calibration})  \\
\bottomrule 
\multicolumn{4}{l}{\footnotemark[1] \citealt{foster2023calibeating},\;\;\footnotemark[2] \citealt{lee2022online}}
\end{tabular}
}
\end{table}

\subsection{Related Work}

\paragraph{Calibeating.}In the seminal work proposing calibeating, \citet{foster2023calibeating} give online guarantees for the Brier and log losses via a bin-wise estimation viewpoint. They also study extensions to multiple forecasters and to simultaneous calibeating and calibration. 
\citet{lee2022online} formulate simultaneous multi-calibration and multi-calibeating as an online multi-objective optimization problem, achieving favorable dependence on the number of external forecasters but suboptimal dependence on the time horizon.  
In comparison, our results are obtained via reductions that connect calibeating to standard online-learning problems. 
Finally, as an application, \citet{gupta2023online} apply calibeating as a robustness layer on top of online Platt scaling to guarantee adversarial calibration in binary classification while preserving predictive performance.

\paragraph{Online recalibration.} Online recalibration is studied in the same post-processing setting as calibeating, but it benchmarks performance by proper-loss regret rather than refinement.
The goal is to achieve small regret relative to the external forecaster while simultaneously ensuring calibrated predictions~\citep{marx2025calibrated,deshpande2024calibrated}.  
For binary classification, \citet{kuleshov2017estimating} provide adversarial online guarantees, and \citet{okoroafor2024faster} obtain improved bounds and explicit regret versus $\ell_1$-calibration tradeoffs via Blackwell approachability for 
strictly proper losses. 
These tradeoffs yield sublinear but typically polynomial-in-$T$ rates. While we focus on the Brier loss in our simultaneous guarantee, we target the stronger refinement benchmark and achieve logarithmic-in-$T$ rates while still ensuring sublinear $\ell_2$-calibration. 

\paragraph{Calibration and proper scoring loss.}Proper scoring losses admit classical decompositions into a reliability (calibration) term and an informativeness (refinement) term~\citep{dawid,sanders1963on,brocker2009reliability}. In this spirit, $\ell_2$-calibration~\citep{foster1998asymptotic} and KL-calibration~\citep{luo2025simultaneous} can be viewed as online analogues of the calibration term for the Brier loss and the log loss, respectively; more generally, this motivates defining online calibration measures compatible with arbitrary proper scoring losses.
Several calibration notions, including $\ell_2$-calibration~\citep{fishelson2025full} and KL-calibration~\citep{luo2025simultaneous}, have been shown to be equivalent to swap-regret objectives.
We exploit this connection in our simultaneous guarantees by enforcing (pseudo-)swap regret via the \ac{BM} reduction~\citep{blum2007from}.

\section{Model}
\label{sec:preliminaries}

We consider an online prediction problem over a finite outcome space with $N$ external forecasts.  
Let $K\geq 2$ be the number of possible outcomes, and $\simplex \coloneqq \{p\in\bbR^K_{\geq 0}: \sum^K_{k=1} p_k = 1\}$ be the probability simplex. 
We let $[n]$ denote the set $\{1,\dots, n\}$ for any positive integer $n$. 
The outcome space is denoted by $\calE \coloneqq \{e_i: i \in [K] \} \subseteq \simplex$, 
 where $e_i$ is the $i$-th standard basis vector.

The interaction proceeds for $T$ rounds. At each round $t \in [T]$, the learner first observes $N$ external forecasts, $q^{(n)}_t\in \simplex$, $n \in [N]$, and  makes its own prediction $p_t \in \simplex$. The outcome $y_t\in\calE$ is then revealed, and the learner incurs loss $\ell(p_t,y_t)$. 
For simplicity, we assume that $q_{1:T} \coloneqq (q_t)^T_{t=1}$ and $y_{1:T} \coloneqq (y_{t})^T_{t=1}$ are generated by an oblivious adversary, i.e., they are decided at time $t=0$ with complete knowledge of the learner's algorithm (but not its random bits). 

Throughout, we consider a proper scoring loss
$\ell: \Delta_K \times \calE \rightarrow \bbR$, i.e., losses such that
for any $q\in \Delta_K$, $q\in \arg\min_{p \in \Delta_K} \E_{y\sim q}[\ell(p,y)]$. 
We write $\ell(p,q)\coloneqq \E_{y\sim q}[\ell(p,y)]$.
Let $\mathbf{1}\{\cdot\}$ denote the indicator function, which equals one if the condition holds and zero otherwise. 
Given a prediction sequence $p_{1:T}$ and outcome sequence $y_{1:T}$, for any $p\in \simplex$, denote the number of times the learner predicts $p$ as $n_T(p) \coloneqq \sum^T_{t=1}	\mathbf{1}\{p_t = p\}$,
and the empirical outcome distribution conditioned on prediction $p$ as $\rho^p_{T}(y) \coloneqq \frac{1}{n_T(p)}\sum^T_{t=1} \mathbf{1}\{p_t = p, y_t = y\}$ for $y\in\calE$, whenever $n_T(p)>0$. 
With these definitions, the cumulative loss, refinement score, and calibration error are defined as follows.

\begin{definition}
\label{lmm:operational-form}
The \emph{cumulative loss} of predictions $p_{1:T}$ under outcomes $y_{1:T}$ is
\begin{align*}
L_T(p_{1:T},y_{1:T}) \coloneqq \sum^T_{t=1}\ell(p_t,y_t)
~.
\end{align*}
The \emph{refinement score} is
\begin{align*}
R_T(p_{1:T},y_{1:T}) \coloneqq \sum_{p} n_T(p)\ell(\rho^p_T, \rho^p_T) = \sum_{p}\min_{q \in \simplex}\sum_{t:p_t=p}\ell(q, y_t)
~.
\end{align*}
Finally, the \emph{calibration error} is
\begin{align*}
K_T(p_{1:T}, y_{1:T})\coloneqq L_T(p_{1:T},y_{1:T}) - R_T(p_{1:T},y_{1:T})
~.
\end{align*}
\end{definition}

By construction, $L_T = R_T + K_T$ and $K_T\ge 0$. 
Moreover, 
$K_T$ coincides with the full-swap-regret notion of \citet{fishelson2025full}, while 
$R_T$ corresponds to the best-in-hindsight swap-regret benchmark. Indeed, for each prediction $p$, the refinement term equals the loss of the best constant predictor over rounds with $p_t=p$. Thus, the refinement score measures the informativeness of the forecasts: sequences that induce finer bins with lower within-bin variability achieve smaller refinement.  In contrast, the calibration error measures within-bin reliability, i.e., how close the issued prediction $p$ is to the empirical conditional distribution $\rho^p_T$ on the corresponding subsequence.

Proper scoring losses admit a classic decomposition into terms measuring the
informativeness (or refinement) of forecasts and their reliability (or calibration)
in a probabilistic setting; see, e.g., \citet{brocker2009reliability, dawid}.
\Cref{lmm:operational-form} can be seen as the empirical counterparts of these quantities. 

\begin{example} For \emph{Brier loss} $\ell(p,y)= \|p-y\|^2_2$, 
the refinement score 	equals the weighted sum of within-bin variances, and the calibration error becomes the
    $\ell_2$-calibration~\citep{foster1998asymptotic},
	\begin{align*}
	R_T(p_{1:T}, y_{1:T}) = \sum_{p} n_T(p) \sum_{t:p_t =p} \frac{1}{n_T(p)} \|\rho^p_T - y_t\|^2_2 ~, \quad K_T(p_{1:T},y_{1:T}) = \sum_{p}{n_T(p)}\|p-\rho^p_T\|^2_2 ~.		
	\end{align*}
\end{example}

\begin{example}
Denote the Shannon entropy under distribution $p$ to be $H(p) = -\sum_{k}p_k \log p_k$. For \emph{log loss} $\ell(p,y) = -\sum^K_{k=1}y_k\log p_k$, 
the refinement score 
equals the weighted sum of the Shannon entropy within each bin, and the calibration error becomes the 
KL--calibration \citep{luo2025simultaneous},
	\begin{align*}	
	R_T(p_{1:T}, y_{1:T}) =\sum_{p} n_T(p) H(\rho^p_T) ~,\quad K_T(p_{1:T},y_{1:T}) = \sum_{p}{n_T(p)} \, \KL{\rho^p_T}{p} ~.	
	\end{align*}
\end{example}  

Motivated by this decomposition, \citet{foster2023calibeating} compare the learner to the external forecaster's refinement score and define the notions of \emph{calibeating} and \emph{multi-calibeating}. 

\begin{definition}[Calibeating and Multi-Calibeating]
A learner 
is $\alpha(T)$-{\emph {multi-calibeating}} w.r.t.\ loss $\ell$ if for any
external forecasts $\{q^{(n)}_{1:T}\}^N_{n=1}$ and outcomes 
$y_{1:T}$, the learner's predictions $p_{1:T}$ satisfy
\begin{align}
\label{eq:alternative-calibeating}
L_T(p_{1:T},y_{1:T}) \leq  R_T(q^{(n)}_{1:T}, y_{1:T}) + \alpha(T)
~,\forall n \in [N]
~.
\end{align}

We call $\alpha(T)$ the multi-calibeating rate.
We say the learner is multi-calibeating if $\alpha(T)=o(T)$. 
When \eqref{eq:alternative-calibeating} holds in expectation over the learner's randomness, we call $\alpha(T)$ the expected multi-calibeating rate. 
When there is only $N=1$ external forecast, we simply say calibeating. 
\end{definition}

We also introduce another performance measure called calibration. 

\begin{definition}[Calibration]
\label{def:calibration}
A learner is $\beta(T)$-calibrated w.r.t.\ loss $\ell$ if for any outcome sequences $y_{1:T}$ (and any external forecasts),
the learner's predictions $p_{1:T}$ satisfy
\begin{align}
\label{eq:def-calibration}
K_T(p_{1:T}, y_{1:T}) \leq \beta(T). 	
\end{align}
We call $\beta(T)$  the calibration rate and say the algorithm is calibrated if $\beta(T)=o(T)$. 
When \eqref{eq:def-calibration} holds in expectation over the algorithm's randomness, we call $\beta(T)$  the expected calibration rate.  
\end{definition}

Note that calibeating and calibration are incomparable in general.
Calibeating only guarantees that the learner’s loss is no larger than the forecaster’s loss minus the forecaster’s calibration error, i.e., it competes with the \emph{external forecaster}’s refinement score. 
The learner's calibration error might not vanish if
it itself attains a low refinement term.

\section{Calibeating $=$ No-Regret Learning}
\label{sec: calibeating}

This section considers the calibeating problem, i.e., when there is only one external forecast every round. 
Let $Q\coloneqq \{q_t: t\in [T]\}$ denote the set of distinct external forecast values that appear over the horizon.%
\footnote{We also use $Q$ to denote the set of possible external forecast values for lower bound results.} 
\citet{foster2023calibeating} study the Brier and log losses and give algorithms 
with calibeating rate of $O(|Q|\log T)$. 
We recover and extend their results by reductions to no-regret learning. 

First, define the regret of predictions $p_{1:T}$ under outcomes $y_{1:T}$ to be
\begin{align*}
\reg_T(p_{1:T},y_{1:T})
\coloneqq
\sum_{t=1}^T \ell(p_t,y_t)
-
\min_{p\in\simplex}\sum_{t=1}^T \ell(p,y_t)~.
\end{align*}
We say an algorithm has (expected-)regret of $\alpha(T)$ if $\reg_T (p_{1:T},y_{1:T}) \leq \alpha(T)$ always holds (in expectation). 
The following theorem shows that calibeating reduces to no-regret learning. 

\begin{theorem}
\label{thm:upper bound}
For any proper loss $\ell$ and any online algorithm $\mathsf{A}$ with regret $\alpha(T)$,
where $\alpha$ is a concave function, \Cref{alg:calibeating-from-regret} is $|Q|\alpha(T/|Q|)$-calibeating.
\end{theorem}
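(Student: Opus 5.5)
The plan is to make \Cref{alg:calibeating-from-regret} run one independent copy $\mathsf{A}_q$ of the no-regret algorithm $\mathsf{A}$ for each distinct external forecast value $q$. Copies are created lazily the first time $q$ appears, so $Q$ need not be known in advance. At round $t$, the learner sees $q_t$, plays the prediction of $\mathsf{A}_{q_t}$, and after observing $y_t$ feeds $y_t$ only to $\mathsf{A}_{q_t}$. Thus each copy $\mathsf{A}_q$ interacts with the subsequence $S_q \coloneqq \{t : q_t = q\}$ of length $T_q \coloneqq n_T(q)$ as if it were a standalone online problem. Because the adversary is oblivious, $q_{1:T}$ and $y_{1:T}$ are fixed in advance, so each restricted sequence $(y_t)_{t\in S_q}$ is also a fixed sequence. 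The regret guarantee of $\mathsf{A}$ therefore applies to each copy with horizon $T_q$, surely or in expectation as appropriate. One minor point is that $\mathsf{A}$'s guarantee must hold for every horizon, i.e.\ anytime, or with $T_q$ used as the horizon; I would assume the regret bound $\alpha(\cdot)$ holds for any horizon.

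Next, decompose the learner's loss over bins:
\begin{align*}
L_T(p_{1:T},y_{1:T}) = \sum_{q\in Q}\sum_{t\in S_q}\ell(p_t,y_t) \le \sum_{q\in Q}\Big(\min_{p\in\simplex}\sum_{t\in S_q}\ell(p,y_t) + \alpha(T_q)\Big).
\end{align*}
By the second expression for the refinement score in \Cref{lmm:operational-form}, applied to the external forecasts, $\sum_{q\in Q}\min_{p}\sum_{t\in S_q}\ell(p,y_t) = R_T(q_{1:T},y_{1:T})$. This identity is where properness enters: the best constant in bin $q$ is attained at $\rho^q_T$. The calibeating gap is therefore at most $\sum_{q\in Q}\alpha(T_q)$.

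Finally, concavity of $\alpha$ and Jensen's inequality with $\sum_{q} T_q = T$ give $\sum_{q\in Q}\alpha(T_q) = |Q|\cdot\frac{1}{|Q|}\sum_q \alpha(T_q)\le |Q|\,\alpha(T/|Q|)$. The main obstacle is not technical; it lies in the bookkeeping that each copy's regret bound legitimately applies to its own subsequence. This requires oblivious outcomes and that the copies' randomness does not couple with the adversary's sequence. If the guarantee is only in expectation, linearity of expectation handles the sum. I would state this independence argument carefully, and the rest is immediate.
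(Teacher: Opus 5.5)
Your proposal is correct and follows essentially the same argument as the paper: one copy of $\mathsf{A}$ per forecast value, per-bin regret bounded by $\alpha(n_T(q))$, summed and identified with $R_T$ via the second form of the refinement score, and then Jensen's inequality using concavity of $\alpha$. Your remarks that $\mathsf{A}$'s bound must hold at every horizon and that each copy faces a fixed subsequence under the oblivious adversary spell out assumptions the paper's proof uses without stating them.
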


\begin{proof}
The reduction partitions the rounds $t \in [T]$ by the external forecast value $q_t$, and runs an independent copy of the no-regret learner $\mathsf{A}$ for each forecast value. 
Formally, 
for any external forecast $q$ that appeared at least once, run a separate copy of $\mathsf{A}$, denoted as $\mathsf{A}_q$, on the subset of rounds $\calI_q\coloneq \{t: q_t = q\}$. 
For each subsequence, we have
\begin{align*}
    \sum_{t: q_t=q}\ell(p_t,y_t) - \min_{p\in\simplex}\sum_{t: q_t=q}\ell(p,y_t) \leq \alpha(n_T(q))~.
\end{align*}

Summing up over all the subsequences and by \Cref{lmm:operational-form}, we have 
\begin{align*}
L_T(p_{1:T},y_{1:T}) - R_T(q_{1:T},y_{1:T}) 
& \leq \sum_{q}\alpha(n_T(q)) \\
& \leq |Q| \alpha \left(\frac{\sum_{q}n_T(q)}{|Q|}\right) \tag{Jensen's inequality} \\
& = |Q| \alpha \left(\frac{T}{|Q|}\right)~,
\end{align*}
which finishes the proof.
\end{proof}
\begin{algorithm}
\DontPrintSemicolon
\caption{Calibeating by Bin-Wise No-Regret}
\label{alg:calibeating-from-regret}
\KwIn{Online learner $\mathsf{A}$.}
\For{$t = 1$ \KwTo $T$}{
  \tcp{prediction}
    Observe external forecasts $q_t\in\Delta_K$.\\
    \If{$\mathsf{A}_{q_t}$ is uninitialized}{
        Initialize a fresh copy $\mathsf{A}_{q_t} \leftarrow \mathsf{A}$. 
    }
    Query $\mathsf{A}_{q_t}$ and obtain prediction $p_t \in \simplex$.\\[0.5ex]
    \tcp{update}
    Observe outcome $y_t$ and incur loss $\ell(p_t,y_t)$.\\
    Update $\mathsf{A}_{q_t}$ with $y_t$.
}
\end{algorithm}

We note that common regret bounds obtained for standard online algorithms are all concave in $T$, e.g., they are often of the form $O((\log T)^\alpha T^{\beta})$ for some $\alpha>0$ and $\beta \in [0,1)$. The algorithm of \citet{foster2023calibeating} can be recovered as a special case of \Cref{thm:upper bound}, with the online algorithm being \ac{FTL} (with a standard interior restriction for the log loss to avoid the unbounded boundary).
Moreover, \Cref{thm:upper bound} readily obtains $O(\sqrt{|Q|KT})$ for bounded proper losses~\citep{luo2024optimal} and $O(|Q|\log T)$ for mixable losses~\citep{hazan2007logarithmic}, which encompasses Brier and log losses as special cases.

\begin{corollary}
\label{coro:upper-bound-bounded-loss}
For bounded proper loss $\ell$, instantiating the online learner in \Cref{thm:upper bound} with the follow-the-perturbed-leader algorithm of \citet{luo2024optimal}  yields an algorithm with expected calibeating rate of $O(\sqrt{|Q|KT})$.
\end{corollary}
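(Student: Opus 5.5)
The plan is to apply \Cref{thm:upper bound} directly with $\mathsf{A}$ taken to be the follow-the-perturbed-leader algorithm of \citet{luo2024optimal}. That work gives, for any bounded proper loss over $K$ outcomes, expected regret $\alpha(T)=c\sqrt{KT}$ against the best fixed prediction in $\simplex$, for an absolute constant $c$. This benchmark is exactly the one in our $\reg_T$. The function $T\mapsto c\sqrt{KT}$ is concave, so the concavity hypothesis of \Cref{thm:upper bound} holds. Plugging in gives
\[
|Q|\,\alpha\!\left(\tfrac{T}{|Q|}\right)=c|Q|\sqrt{KT/|Q|}=c\sqrt{|Q|KT}.
\]

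The one point needing care is that the guarantee of \citet{luo2024optimal} holds only in expectation, and it is established against an oblivious adversary. \Cref{thm:upper bound} is stated for deterministic regret bounds, so I would redo its proof in expectation.

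First, fix the oblivious sequences $q_{1:T}$ and $y_{1:T}$. Each bin $\calI_q$ is then a fixed set of rounds, and the outcome subsequence fed to $\mathsf{A}_q$ is fixed in advance. In particular, it does not depend on the internal randomness of any copy. Each copy therefore faces an oblivious adversary on its own subsequence, and its guarantee applies:
\[
\E\Bigl[\sum_{t\in\calI_q}\ell(p_t,y_t)\Bigr]-\min_{p}\sum_{t\in\calI_q}\ell(p,y_t)\le \alpha(n_T(q)).
\]

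Next, sum over $q$ using linearity of expectation. The comparator terms $\min_{p}\sum_{t\in\calI_q}\ell(p,y_t)$ are deterministic, and their sum is exactly $R_T(q_{1:T},y_{1:T})$ by \Cref{lmm:operational-form}.

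Finally, apply Jensen's inequality to $\sum_q\alpha(n_T(q))$, exactly as in the proof of \Cref{thm:upper bound}. This gives
\[
\E[L_T]-R_T\le |Q|\,\alpha(T/|Q|)=c\sqrt{|Q|KT},
\]
which is the claimed expected calibeating rate of $O(\sqrt{|Q|KT})$.

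I do not expect any substantial obstacle. The step to check most carefully is that the cited algorithm's bound really has the form $O(\sqrt{KT})$ uniformly over all bounded proper losses and is horizon-free or anytime. This matters because each copy runs for an unknown number of rounds $n_T(q)$. If the algorithm needs the horizon as input, I would use a doubling trick, which keeps the bound concave up to a constant factor, or simply tune every copy with the known $T$. The latter yields $c\sqrt{K}\sum_q\sqrt{n_T(q)}$ only if the bound adapts to the actual number of rounds, so the doubling trick is the safe default.
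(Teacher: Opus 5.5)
Your proposal is correct and follows the paper's argument: instantiate \Cref{thm:upper bound} with the $O(\sqrt{KT})$ expected-regret FTPL of \citet{luo2024optimal}, and use concavity of $\sqrt{\cdot}$ to get $|Q|\sqrt{KT/|Q|}=\sqrt{|Q|KT}$. Your extra care is sound and fills in details the paper leaves implicit: you rerun the bin-wise summation in expectation under the oblivious adversary, and you use a doubling trick because each copy does not know $n_T(q)$.
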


We note that, inherited from the no-regret guarantees in \citet{luo2024optimal}, the algorithm in \Cref{coro:upper-bound-bounded-loss} can actually achieve $O(\sqrt{|Q|KT})$ \emph{simultaneously} for all bounded proper losses. 

\begin{definition}
\label{def:exp-concavity}
A convex function $\ell(\cdot)$ is $\eta$-mixable if  for any probability distribution $\pi \in \Delta(\Delta_K)$, there exists a prediction $p_\pi \in \Delta_K$ such that
$e^{-\eta \ell(p_\pi, y)} \geq \int e^{-\eta \ell(p,y)} \pi(\mathrm{d} p)$ holds for all $y \in \calE$.
\end{definition}

\begin{corollary}
\label{coro:upper-bound-mixable}
For an $\eta$-mixable loss  $\ell$ (e.g., Brier and log losses), instantiating the online learner in \Cref{thm:upper bound} with \ac{EWOO}~\citep{hazan2007logarithmic}  yields an algorithm with expected calibeating rate of  $O(|Q| \log T)$.
\end{corollary}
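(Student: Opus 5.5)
Corollary~\ref{coro:upper-bound-mixable} follows by plugging a logarithmic-regret learner for mixable losses into \Cref{thm:upper bound}. The plan has two steps. First, establish that \ac{EWOO} (or equivalently Vovk's aggregating algorithm over the continuum of constant experts $p\in\simplex$) has expected regret at most $\alpha(T)=c_{\eta,K}\log T + c'$ for an $\eta$-mixable loss. Second, observe that this $\alpha$ is concave, so \Cref{thm:upper bound} gives calibeating rate $|Q|\alpha(T/|Q|) = O(|Q|\log(T/|Q|)) = O(|Q|\log T)$. Since the per-bin bound holds in expectation and expectation is linear, summing the bin-wise bounds as in the proof of \Cref{thm:upper bound} yields the expected rate. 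To keep $\alpha$ concave on all of $[1,\infty)$, and to avoid the case $n_T(q)=0$, we would take $\alpha(n)=c\log(1+n)$ and note that bins with $n_T(q)=0$ do not belong to $Q$.

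\textbf{Regret bound.} We would run exponential weights with rate $\eta$ over $\simplex$ starting from the uniform prior $\pi_1$, and set $\pi_t(\mathrm{d}p)\propto \exp(-\eta\sum_{s<t}\ell(p,y_s))\pi_1(\mathrm{d}p)$. We would predict either the mixability substitute $p_{\pi_t}$ from Definition~\ref{def:exp-concavity}, or, for \ac{EWOO} proper, the mean of $\pi_t$, which is justified when $\ell(\cdot,y)$ is $\eta$-exp-concave as Brier and log are on suitable domains. Mixability makes $W_t=\int e^{-\eta\sum_{s\le t}\ell(p,y_s)}\pi_1(\mathrm{d}p)$ satisfy
\[
\textstyle\sum_t \ell(p_t,y_t)\le -\frac1\eta\log W_T .
\]
We then lower bound $W_T$ by integrating only over a small neighbourhood of the comparator $p^\star$. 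Shrink toward it with $S=\{(1-\epsilon)p^\star+\epsilon u: u\in\simplex\}$, which has prior mass $\epsilon^{K-1}$. On $S$, for every $y$ we need $\ell((1-\epsilon)p^\star+\epsilon u,y)\le \ell(p^\star,y)+O(\epsilon)$ in a suitable sense. This gives
\[
\mathrm{Reg}_T\le \tfrac{K-1}{\eta}\log\tfrac1\epsilon + O(\epsilon T),
\]
and $\epsilon=1/T$ yields $O(\tfrac{K}{\eta}\log T)$.

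\textbf{Main obstacle.} The hard step is the neighbourhood argument when $\ell$ is unbounded or non-Lipschitz near the boundary of $\simplex$, as for log loss. For log loss we would use the cleaner bound
\[
\ell((1-\epsilon)p^\star+\epsilon u,y)\le \ell(p^\star,y)-\log(1-\epsilon),
\]
which holds for all $u$ because the $y$-coordinate is at least $(1-\epsilon)p^\star_y$. That costs $T\log\frac{1}{1-\epsilon}=O(1)$ at $\epsilon=1/T$. For Brier, $\ell$ is $O(1)$-Lipschitz, which gives the same $O(\epsilon T)$ term directly. For a general mixable loss we would either assume a mild regularity condition of this form, or simply cite the logarithmic regret guarantee of \citet{hazan2007logarithmic} or the aggregating algorithm, exactly as the corollary statement does, and then conclude via \Cref{thm:upper bound}.
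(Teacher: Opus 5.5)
Your proposal is correct and follows the paper's argument. The paper simply plugs EWOO's $O(\log T)$ regret bound into \Cref{thm:upper bound}, using concavity of $\alpha$, Jensen, and linearity of expectation, to get $|Q|\alpha(T/|Q|)=O(|Q|\log T)$; your self-contained regret derivation is an optional extra. Your caveat that predicting the posterior mean needs exp-concavity (otherwise use the substitute $p_{\pi_t}$) is well taken, and in the exp-concave case the shrinkage step needs no extra regularity, since $e^{-\eta\ell((1-\epsilon)p^\star+\epsilon u,y)}\ge(1-\epsilon)e^{-\eta\ell(p^\star,y)}$ already gives only an additive $O(1/\eta)$ cost at $\epsilon=1/T$.
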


Besides the upper bound, we also prove that any lower bound for no-regret learning with a proper loss implies a lower bound for calibeating.
Combining with \Cref{thm:upper bound}, our results show that calibeating is minimax-equivalent to regret minimization.  We defer the proof to \Cref{app:lower-bound}. 

\begin{restatable}{theorem}{thmlowerbound}
\label{thm:lower bound}
For any proper loss  $\ell$, denote the optimal regret bound as
\begin{align}
\label{eq:lower-bound-regret}
\beta(T) \coloneqq \inf_{\mathsf{A}} \sup _{y_{1:T}\in \calE^T} \mathbb{E}_{p_{1:T}\sim \mathsf{A}}\left[\sum_{t=1}^T \ell\left(p_t, y_t\right)-\min _{p \in \simplex} \sum_{t=1}^T \ell\left(p, y_t\right) \right]
~,
\end{align}
where $\mathsf{A}$ ranges over (possibly randomized) online algorithms. 
Then, every algorithm is at best $|Q| \beta(\lfloor T / |Q|\rfloor)$-calibeating.

\end{restatable}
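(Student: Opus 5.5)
The plan is to build a hard instance in which the external forecast carries no information beyond identifying a block, so that the learner must run $|Q|$ independent regret-minimization problems. Fix $Q=\{q^{(1)},\dots,q^{(m)}\}$ with $m=|Q|$ and set $T_0=\lfloor T/m\rfloor$. Split the first $mT_0$ rounds into $m$ consecutive blocks of length $T_0$, and let $q_t=q^{(j)}$ on block $j$. On any leftover rounds, set $q_t=q^{(m)}$ and make the outcomes favourable; alternatively, note that the leftover rounds only help the bound. With this choice the refinement score splits as
\[
R_T(q_{1:T},y_{1:T})=\sum_{j=1}^m \min_{p\in\simplex}\sum_{t\in \text{block } j}\ell(p,y_t).
\]
The leftover rounds belong to bin $q^{(m)}$, and I handle them more carefully below. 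Under this reduction, the calibeating gap $L_T-R_T$ is exactly the sum over blocks of the learner's regret on each block. This is where I use that the refinement term equals the best constant predictor per bin, from \Cref{lmm:operational-form}.

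For each block I would pick the outcomes adversarially against the learner's conditional behavior on that block. Because the learner is an arbitrary (randomized) algorithm, its predictions inside block $j$ depend on the history of earlier blocks. Fix any realization of the outcomes in blocks $1,\dots,j-1$. The learner's restriction to block $j$ is then some online algorithm $\mathsf{A}_j$ with horizon $T_0$, whose internal randomness may include the earlier coins. By the definition of $\beta$ in \eqref{eq:lower-bound-regret}, for every $\epsilon>0$ there is a sequence $y^{(j)}\in\calE^{T_0}$ on which $\mathsf{A}_j$ has expected regret at least $\beta(T_0)-\epsilon$.

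The main obstacle is the adversary's obliviousness: the adversary must fix all outcomes in advance, while the induced $\mathsf{A}_j$ depends on the outcomes of earlier blocks. Since the adversary knows the algorithm, this is not a problem. I would choose the $y^{(j)}$ sequentially: $y^{(1)}$ against $\mathsf{A}_1$, then $y^{(2)}$ against the algorithm induced by conditioning on the already-fixed $y^{(1)}$, and so on. Every such choice is deterministic, so the whole sequence is oblivious. The learner's randomness from earlier blocks is just extra internal randomness of $\mathsf{A}_j$, and since $\beta$ is an infimum over randomized algorithms this is allowed. Taking expectations and summing, the expected calibeating gap over the first $mT_0$ rounds is at least $m(\beta(T_0)-\epsilon)$.

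Finally, I deal with the leftover $r<m$ rounds. The simplest fix is to assign them to a fresh forecast value outside the set of $m$ values, so there are $m+1$ values. A cleaner option is to extend block $m$ to length $T_0+r$ and note that $\beta$ is nondecreasing in the horizon. Monotonicity holds because the adversary can append outcomes equal to the empirical distribution of earlier ones, or use a bound that survives padding. If monotonicity is awkward, I would instead put the leftover rounds in bin $q^{(m)}$ after its block, choose their outcomes adversarially, and argue that the extra rounds cannot decrease the block's regret. The justification is that the best fixed $p$ over the extended block does at least as well as the one over the original block plus those rounds, and that the learner's loss on the added rounds is at least the comparator's marginal loss in expectation when the outcomes are chosen properly. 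Since $\epsilon$ is arbitrary, this yields the claimed lower bound $|Q|\beta(\lfloor T/|Q|\rfloor)$.
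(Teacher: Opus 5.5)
Your proof is correct, but it takes a different route from the paper's.

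\textbf{How the two arguments differ.} The paper invokes Yao's minimax principle to obtain, for each horizon, a hard \emph{distribution} $\mathrm{S}_{T_q}$ over outcome sequences. It then draws the outcomes of the $|Q|$ bins independently from these distributions; the bins may be any partition of $[T]$, even interleaved. For a deterministic learner, the other bins' outcomes act as independent internal randomness, so each bin's expected regret is at least $\beta(T_q)$. You instead build a single deterministic sequence block by block, using the fact that the oblivious adversary knows the algorithm. Once the earlier blocks are fixed, the learner restricted to block $j$ is a legitimate randomized horizon-$T_0$ algorithm, so the inf--sup definition of $\beta$ directly yields a bad $y^{(j)}$.

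\textbf{What each approach buys.} Your route needs no minimax theorem, which is not automatic here given the continuous prediction space and unbounded losses such as log loss. The price is that you rely on consecutive blocks, which is enough for a lower bound. You also handle uneven block lengths explicitly. The paper's last step, ``choosing balanced $T_q$,'' tacitly needs $\beta$ to be nondecreasing to pass from $\sum_q \beta(T_q)$ to $|Q|\beta(\lfloor T/|Q|\rfloor)$, and your padding argument supplies exactly this.

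\textbf{Two small corrections.} First, assigning the leftover rounds to a fresh forecast value changes $|Q|$, so that option does not give the stated bound. Second, appended outcomes cannot ``equal'' the empirical distribution $\rho$, since they must lie in $\calE$. The precise form of your third option is as follows. For each appended round $t$, properness gives $\sum_{y}\rho(y)\bigl(\E[\ell(p_t,y)]-\ell(\rho,y)\bigr)=\E[\ell(p_t,\rho)]-\ell(\rho,\rho)\ge 0$. Hence some $y_t$ in the support of $\rho$ satisfies $\E[\ell(p_t,y_t)]\ge\ell(\rho,y_t)$. Fixing the comparator at $\rho$ then shows that the extended block's expected regret is at least that of the original block.
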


Combining \Cref{thm:lower bound} with known regret lower bounds for bounded proper losses~\citep{luo2024optimal} and Brier and log losses~\citep{cesabianchi2006prediction} yields the following. 

\begin{corollary}
\label{coro:lower-bound-bounded-loss}
There exist bounded proper losses 
with calibeating rate
at least $\Omega(\sqrt{|Q|KT})$.
\end{corollary}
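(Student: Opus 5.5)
The plan is to combine \Cref{thm:lower bound} with a known regret lower bound for bounded proper losses. By \Cref{thm:lower bound}, every calibeating algorithm suffers rate at least $|Q|\,\beta(\lfloor T/|Q|\rfloor)$, where $\beta$ is the minimax expected regret for the loss $\ell$. It therefore suffices to exhibit a bounded proper loss $\ell$ over $K$ outcomes with $\beta(T') = \Omega(\sqrt{K T'})$ for every horizon $T'$. Then, assuming $T \ge |Q|$ so that $\lfloor T/|Q|\rfloor \ge T/(2|Q|)$, we get
\begin{align*}
|Q|\,\beta(\lfloor T/|Q|\rfloor) \;\ge\; c\,|Q|\sqrt{K T/(2|Q|)} \;=\; \Omega\bigl(\sqrt{|Q| K T}\bigr)~.
\end{align*}

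For the regret lower bound, I would invoke the result of \citet{luo2024optimal}. Their minimax analysis of online learning with bounded proper losses shows that some bounded proper loss forces $\Omega(\sqrt{KT})$ regret, which matches their follow-the-perturbed-leader upper bound used in \Cref{coro:upper-bound-bounded-loss}. If I instead had to prove this myself, I would take a loss whose minimizers are effectively restricted to the vertices: for example, the proper loss induced by the concave Bayes risk $G(q) = 1-\max_k q_k$ (a scaled zero-one/expert-type loss). Against this loss, predicting a distribution is no better than choosing an expert among the $K$ outcomes. The standard expert lower bound with i.i.d.\ uniformly random outcomes, where the best fixed action gains $\Omega(\sqrt{T\log K})$, is not quite sufficient here, since we need $\sqrt{K}$ rather than $\sqrt{\log K}$. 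So I would rely on the \citet{luo2024optimal} construction, which uses a loss that tends to be strongly anti-concentrated around the uniform prediction so that the fluctuations of the $K$ coordinate counts each contribute.

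The main obstacle is precisely this $\sqrt{K}$ dependence: it must come from the cited regret lower bound and cannot be obtained from a generic expert argument. Everything else is bookkeeping. The remaining details are:
\begin{itemize}
\item monotonicity of the bound, so that $\beta(\lfloor T/|Q|\rfloor)$ stays of order $\sqrt{K T/|Q|}$;
\item the trivial regime $T<|Q|$, which \Cref{thm:lower bound}'s construction handles by choosing $|Q|\le T$, or which can be excluded by assumption;
\item noting that \Cref{thm:lower bound} applies to expected rates of randomized algorithms, so the corollary covers them as well.
\end{itemize}
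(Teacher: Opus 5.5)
Your proposal is correct and follows the paper's argument. The paper likewise plugs the $\Omega(\sqrt{KT'})$ regret lower bound for a specific bounded proper loss from \citet{luo2024optimal} into \Cref{thm:lower bound} with $T' = \lfloor T/|Q|\rfloor$, giving $|Q|\sqrt{K T/|Q|} = \sqrt{|Q|KT}$ up to constants. Your aside on the loss induced by $1-\max_k q_k$ is unnecessary, and its expert-style heuristic is loose, but nothing in the proof depends on it since you ultimately rely on the cited construction.
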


\begin{corollary}
\label{coro:lower-bound-brier-log}
For the Brier and log losses, the calibeating rate is at least $\Omega (|Q|\log (T/|Q|))$. 
\end{corollary}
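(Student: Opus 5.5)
The statement to prove is \Cref{coro:lower-bound-brier-log}. I would derive it directly from \Cref{thm:lower bound}: every algorithm is at best $|Q|\beta(\lfloor T/|Q|\rfloor)$-calibeating, where $\beta(\cdot)$ is the minimax expected regret of \eqref{eq:lower-bound-regret}. So it suffices to show that, for the Brier loss and for the log loss, the minimax regret over $m$ rounds against the best fixed distribution in $\simplex$ satisfies $\beta(m)\ge c\log m$ for a constant $c>0$ and all large $m$. Substituting $m=\lfloor T/|Q|\rfloor$ then gives $|Q|\,c\log\lfloor T/|Q|\rfloor=\Omega(|Q|\log(T/|Q|))$. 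When $T/|Q|$ is bounded, the floor and small-$m$ cases are absorbed into the constant, since the claimed bound is then $O(|Q|)$ and is trivially implied by $\beta(1)>0$.

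\textbf{Log loss.} I would restrict the adversary to outcomes in $\{e_1,e_2\}$. This can only lower $\beta$, and for these sequences the comparator's optimum is supported on the two coordinates anyway. The minimax regret for binary log loss against constant experts is the Shtarkov sum (normalized maximum likelihood) of the Bernoulli class. This equals $\tfrac12\log m+O(1)$~\citep{cesabianchi2006prediction}, and the bound applies to randomized learners because log loss is convex, so one may pass to the mean prediction.

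\textbf{Brier loss.} I would again restrict to binary outcomes and use the standard Bayesian argument. Draw $\theta\sim\mathrm{Unif}[0,1]$, and let the outcomes be i.i.d.\ with $\Pr(y_t=e_1)=\theta$. The best fixed predictor in hindsight is the empirical frequency. In expectation, its loss is below $\sum_t 2\theta(1-\theta)$ by $2\theta(1-\theta)\cdot O(1)$; more precisely the comparator term equals $2(m-1)\theta(1-\theta)$. Any learner's expected loss at round $t$ is at least the Bayes risk, which is $2\theta(1-\theta)$ plus the posterior variance of $\theta$ given $t-1$ samples. Since $\ell$ is convex, randomization cannot help. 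The posterior variance under a uniform prior is $\Theta(\theta(1-\theta)/t)$ on average. Summing over $t$ gives an expected regret of at least $c\log m$, and sup over sequences is at least this average. Alternatively, I could cite the known $\Omega(\log m)$ lower bound for square loss over $[0,1]$ from \citet{cesabianchi2006prediction}, noting that binary Brier loss equals twice the square loss on the first coordinate.

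The main obstacle is verifying that the cited regret lower bounds are stated for exactly the comparator class $\simplex$ and for randomized learners. This is handled by the convexity and restriction-to-binary remarks above, so the argument reduces to bookkeeping with the floor in \Cref{thm:lower bound}.
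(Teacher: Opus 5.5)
Your proposal is correct and follows the paper's route. The paper obtains this corollary by plugging the known $\Omega(\log m)$ minimax regret lower bounds for the Brier and log losses \citep{cesabianchi2006prediction} into \Cref{thm:lower bound} with $m=\lfloor T/|Q|\rfloor$; your Shtarkov-sum and uniform-prior Bayesian arguments, together with the observation that binary outcomes suffice (projection for Brier, renormalization for log loss never increases loss), make those cited bounds explicit.
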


\section{Multi-Calibeating $=$ Calibeating $+$ Expert Problem}

\label{sec: multi-calibeating}

Next, we consider the multi-calibeating problem. 
\citet{foster2023calibeating} obtain multi-calibeating rates of $O((N+|Q|)\log T)$ and  $O(\sqrt{NT}+|Q|\log T)$, via Blackwell approachability and online linear regression.
\citet{lee2022online} achieve a rate logarithmic in $N$, but polynomial in $T$ (more precisely, $\smash{\tilde{O}(\sqrt{|Q|}(\log N)^{\frac{1}{4}}T^{\frac{3}{4}} )} $ for the optimal choice of parameters).

This section presents a simple reduction from multi-calibeating to the expert problem.
Via that reduction, we achieve the optimal multi-calibeating rates. 
 
\paragraph{Expert Problem.} The interaction protocol in this problem is the same as in multi-calibeating: at each round $t$, the learner observes $N$ expert predictions $\smash{\{p_t^{(n)}\}_{n}\subseteq\Delta_K}$, and makes its own prediction $p_t \in \simplex$.
An experts algorithm $\mathsf{E}$ 
achieves regret $\gamma(T)$ if for every sequence
$\smash{\{(p_t^{(1:N)},y_t)\}_{t=1}^T}$,
\begin{align}
	\label{eq:experts-regret-pred}
\mathbb E \left[\sum_{t=1}^T \ell(p_t,y_t)\right]
\le
\min_{n\in[N]}\sum_{t=1}^T \ell(p_t^{(n)},y_t) + \gamma(T),
\end{align}
where the expectation is over the randomness of $\mathsf{E}$. 

\medskip

Comparing this definition of regret with the definition of multi-calibeating rate, the only difference is that the latter remaps the experts/forecasters' predictions optimally, while the former does not. 
The remapping of each individual forecaster is precisely the problem of calibeating.
Hence, we run a separate calibeating algorithm for each forecaster, and use an experts algorithm to aggregate their decision.
See \Cref{alg:bestN-general} for a formal description.

\begin{algorithm}
\DontPrintSemicolon
\caption{Multicalibeating by Expert Aggregation}
\label{alg:bestN-general}
\textbf{Sub-routines:}\\[.5ex]
\begin{itemize}[after=\vspace{-0.5\baselineskip},parsep=0pt]
\setlength\itemsep{0pt}
\item For each forecaster $n \in [N]$, a separate calibeating algorithm $\mathsf{A}^{(n)}$ (\Cref{alg:calibeating-from-regret}).
\item Experts algorithm $\mathsf{E}$~\citep[e.g., Hedge,][]{freund1997decision}.
\end{itemize}
\For{$t = 1$ \KwTo $T$}{
  \tcp{prediction}
    Observe external forecasts $q_t^{(1)},\dots,q_t^{(N)}\in\Delta_K$.\\
    For each $n \in [N]$, query $\mathsf{A}^{(n)}$ with forecast $q_t^{(n)}$ to get its prediction $p_t^{(n)}\in\Delta_K$.\\
    Query $\mathsf{E}$ with $\{p_t^{(n)}\}_{n=1}^N$ as the experts' forecasts, and follow its prediction $p_t$. \\[0.5ex]
    \tcp{update}
    Observe outcome $y_t$ and update $\mathsf{A}^{(n)}$ for each $n \in [N]$ with this outcome.\\
    Update $\mathsf{E}$ with  $\ell(p_t^{(n)},y_t)$ as the loss of expert $n \in [N]$.
}
\end{algorithm}

\begin{theorem}
\label{thm:upper-bound-multi-calibeating}
For any loss function $\ell$, any calibeating algorithm with rate $\alpha(T)$, and any experts algorithm with regret $\gamma(T)$, Algorithm \ref{alg:bestN-general} is 
$(\alpha(T)+\gamma(T))$-calibeating.
\end{theorem}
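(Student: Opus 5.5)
The plan is a direct chaining of the two guarantees through the intermediate predictions $p^{(n)}_{1:T}$ produced by the subroutines $\mathsf{A}^{(n)}$ in \Cref{alg:bestN-general}. The key structural fact is that each $\mathsf{A}^{(n)}$ only sees $q^{(n)}_{1:T}$ and $y_{1:T}$. Its outputs therefore do not depend on the experts algorithm's choices or random bits. So, against an oblivious adversary, the sequence of expert predictions fed to $\mathsf{E}$ is itself a legitimate (possibly randomized, but independent of $\mathsf{E}$'s randomness) input sequence for the expert problem.

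First, fix external forecasts $\{q^{(n)}_{1:T}\}_{n=1}^N$ and outcomes $y_{1:T}$. For each $n$, the calibeating guarantee of $\mathsf{A}^{(n)}$ with rate $\alpha(T)$ gives
\[
\E\left[\sum_{t=1}^T \ell(p^{(n)}_t,y_t)\right] \le R_T(q^{(n)}_{1:T},y_{1:T}) + \alpha(T),
\]
with the expectation over the randomness of $\mathsf{A}^{(n)}$ if it is randomized. Second, condition on the realized subroutine outputs $\{p^{(n)}_{1:T}\}_n$, which are fixed with respect to $\mathsf{E}$'s randomness. The regret bound \eqref{eq:experts-regret-pred} then gives
\[
\E_{\mathsf{E}}\left[\sum_t \ell(p_t,y_t)\right] \le \min_{m}\sum_t \ell(p^{(m)}_t,y_t)+\gamma(T) \le \sum_t \ell(p^{(n)}_t,y_t)+\gamma(T)
\]
for every fixed $n$. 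Third, take expectation over the subroutines' randomness and combine the two displays. This yields
\[
\E[L_T(p_{1:T},y_{1:T})] \le R_T(q^{(n)}_{1:T},y_{1:T}) + \alpha(T)+\gamma(T)
\]
for all $n\in[N]$, which is the claimed multi-calibeating rate.

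The only step needing care is the second one. We must justify applying the expert regret bound, which is stated for fixed sequences, to expert predictions generated adaptively by the calibeating subroutines. I would handle this by observing two things. The subroutine outputs at time $t$ are measurable functions of $q^{(n)}_{1:t}$, $y_{1:t-1}$ and the subroutines' own internal randomness, never of $\mathsf{E}$'s past actions. Hence, conditioned on the subroutines' randomness, the whole input sequence to $\mathsf{E}$ is fixed in advance, i.e.\ oblivious. Finally, the minimum over $m$ is simply relaxed to each specific $n$ before taking the outer expectation, so there is no issue with exchanging the expectation and the minimum.
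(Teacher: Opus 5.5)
Your proposal is correct and follows the same route as the paper: apply the regret bound of $\mathsf{E}$, relax the minimum over experts to a fixed $n$, and chain this with the calibeating guarantee of $\mathsf{A}^{(n)}$ to obtain $\alpha(T)+\gamma(T)$. Your extra step of conditioning on the subroutines' internal randomness, so that $\mathsf{E}$ faces an oblivious input sequence, makes explicit what the paper leaves implicit, but the argument is the same.
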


\begin{proof}
By the regret bound of algorithm $\mathsf{E}$, for any $n \in [N]$, we have
\begin{align*}
\E\, [L_T(p_{1:T},y_{1:T})]  
\leq  
L_T(p^{(n)}_{1:T},y_{1:T}) + \gamma(T)
~. 
\end{align*}
By the calibeating rate of algorithm $\mathsf{A}$, we have
\begin{align*}
L_T(p^{(n)}_{1:T},y_{1:T}) \leq R_T(q^{(n)}_{1:T},y_{1:T}) +\alpha(T).
\end{align*}
Combining these inequalities yields a multi-calibeating rate of $\alpha(T)+ \gamma(T)$. 
\end{proof}

Let $\smash{Q^{(n)} \coloneqq \{q^{(n)}_t: t \in [T]\}}$ denote the set of distinct external forecasts made by forecaster $n \in [N]$. 
We assume $\smash{|Q^{(n)}| = |Q|}$ for all $n$ for simplicity.\footnote{Our results also hold when $|Q^{(n)}|$s differ across forecasters, and the resulting bounds adapt to specific forecasters. } 
By the regret bounds of Hedge (e.g.,~\citealp[Theorems~2.1 and~2.2]{bubeck2011introduction}), and the calibeating rates in \Cref{coro:upper-bound-bounded-loss,coro:upper-bound-mixable}, we get the following corollaries. In contrast to the loss-oblivious property of \Cref{coro:upper-bound-bounded-loss}, the algorithm in \Cref{coro:upper-bound-multi-bounded-loss} requires a fixed $\ell$, as the experts algorithm relies on the loss values to update. 
\begin{corollary}
\label{coro:upper-bound-multi-bounded-loss}
For any bounded proper loss $\ell$, there exists an
algorithm with an expected multi-calibeating rate of 
$O(\sqrt{T\log N}+\sqrt{|Q| K T})$. 
\end{corollary}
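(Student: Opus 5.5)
We instantiate \Cref{alg:bestN-general} and apply \Cref{thm:upper-bound-multi-calibeating}. For each forecaster $n\in[N]$, the calibeating subroutine $\mathsf{A}^{(n)}$ is \Cref{alg:calibeating-from-regret} run with the follow-the-perturbed-leader learner of \citet{luo2024optimal}. The aggregator $\mathsf{E}$ is Hedge over the $N$ candidate predictions $p^{(n)}_t$.

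\textbf{Step 1 (calibeating term).} By \Cref{coro:upper-bound-bounded-loss}, each $\mathsf{A}^{(n)}$ has expected calibeating rate $\alpha(T)=O(\sqrt{|Q^{(n)}|KT})=O(\sqrt{|Q|KT})$. That is,
\begin{align*}
\E\bigl[L_T(p^{(n)}_{1:T},y_{1:T})\bigr]\le R_T(q^{(n)}_{1:T},y_{1:T})+O(\sqrt{|Q|KT}).
\end{align*}
Here the expectation is over the internal randomness of $\mathsf{A}^{(n)}$. The underlying regret bound $O(\sqrt{KT})$ is concave in $T$, so \Cref{thm:upper bound} applies. That theorem's proof only sums per-bin regret bounds, so it goes through in expectation as well.

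\textbf{Step 2 (expert term).} Rescale the bounded loss to $[0,1]$. Hedge with a tuned learning rate then has expected regret $\gamma(T)=O(\sqrt{T\log N})$ against any sequence of expert losses $\ell(p^{(n)}_t,y_t)\in[0,1]$ (\citealp[Theorem~2.2]{bubeck2011introduction}); the constant scales with the loss bound.

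\textbf{Step 3 (combining).} The one point needing care is that the experts' predictions $p^{(n)}_t$ are random and adaptive, because the $\mathsf{A}^{(n)}$ are randomized. This does not break the Hedge guarantee: the bound holds for every fixed realization of the expert loss sequence, since Hedge's regret bound holds against adaptive (even worst-case) expert losses. We use independent randomness for Hedge and for each $\mathsf{A}^{(n)}$. Condition on the randomness of the subroutines, apply the Hedge bound, then take expectations over the subroutines. This gives, for every $n$,
\begin{align*}
\E[L_T(p_{1:T},y_{1:T})]\le \E[L_T(p^{(n)}_{1:T},y_{1:T})]+\gamma(T).
\end{align*}
Chaining with Step 1, exactly as in the proof of \Cref{thm:upper-bound-multi-calibeating}, gives the stated rate $O(\sqrt{T\log N}+\sqrt{|Q|KT})$.

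I expect Step 3's conditioning argument to be the only nontrivial check, and it is routine. The rest is direct substitution into earlier results.
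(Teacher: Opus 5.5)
Your proposal is correct and follows the paper's argument: instantiate \Cref{alg:bestN-general} with the FTPL-based calibeating subroutine of \Cref{coro:upper-bound-bounded-loss} for each forecaster and Hedge as the aggregator, then invoke \Cref{thm:upper-bound-multi-calibeating}. Your Step~3 conditioning on the subroutines' independent randomness, and your reliance on the expected-regret (sampling) version of Hedge, make explicit details the paper leaves implicit; the sampling version is what keeps the expert bound valid when the proper loss is not convex in $p$.
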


\begin{corollary}
\label{coro:upper-bound-multi-mixable}
For any $\eta$-mixable loss  $\ell$ (e.g., Brier and Log losses), there exists an algorithm with expected multi-calibeating rate of
$O(\log N + |Q|\log T)$.
\end{corollary}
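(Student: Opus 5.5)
We instantiate \Cref{thm:upper-bound-multi-calibeating} with two ingredients. For each forecaster $n$ we take $\mathsf{A}^{(n)}$ to be the calibeating algorithm of \Cref{coro:upper-bound-mixable}: \Cref{alg:calibeating-from-regret} run with \ac{EWOO} as the base learner. This gives $\alpha(T)=O(|Q^{(n)}|\log T)$ in expectation. As the experts algorithm $\mathsf{E}$ we use the Vovk aggregating algorithm (exponential weights with the mixability substitution) at learning rate $\eta$. For an $\eta$-mixable loss this has the deterministic guarantee $\sum_t \ell(p_t,y_t)\le \min_n \sum_t \ell(p^{(n)}_t,y_t)+\frac{\log N}{\eta}$, so $\gamma(T)=O(\log N)$. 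The total is then $O(\log N + |Q|\log T)$.

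\textbf{Step 1: the experts bound.} I would verify the experts bound via the standard potential argument. Set $W_t=\sum_n \frac1N e^{-\eta\sum_{s\le t}\ell(p^{(n)}_s,y_s)}$ and apply \Cref{def:exp-concavity} with $\pi$ equal to the normalized weights, which are supported on the $N$ expert predictions. This gives $e^{-\eta\ell(p_t,y_t)}\ge W_t/W_{t-1}$. Telescoping yields $\sum_t\ell(p_t,y_t)\le -\frac1\eta\log W_T\le \min_n L_T(p^{(n)}_{1:T},y_{1:T})+\frac{\log N}{\eta}$.

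\textbf{Step 2: handling randomness.} This is the one delicate point, because the sub-learners may be randomized (the calibeating rates are stated in expectation). I would condition on the internal randomness of all $\mathsf{A}^{(n)}$. The experts guarantee holds pointwise for every realization of $p^{(n)}_{1:T}$. The calibeating bound $\E[L_T(p^{(n)}_{1:T},y_{1:T})]\le R_T(q^{(n)}_{1:T},y_{1:T})+\alpha(T)$ holds for each $n$. Taking expectations of the pointwise inequality $L_T(p_{1:T})\le L_T(p^{(n)}_{1:T})+\gamma$ for a fixed $n$, and then chaining, gives the claim for every $n$. The minimum over $n$ is not needed inside the expectation. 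Since the adversary is oblivious, the outcomes $y_{1:T}$ and forecasts $q^{(n)}_{1:T}$ do not depend on this randomness, so the chaining is valid.

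\textbf{Step 3: the forecaster-dependent bound.} With $|Q^{(n)}|$ possibly varying across forecasters, the bound for forecaster $n$ is $O(\log N + |Q^{(n)}|\log T)$. The constants depend on $\eta$ and on the regret constant of \ac{EWOO} for the given mixable loss. For the log loss one may need the standard interior restriction noted after \Cref{thm:upper bound}, which keeps the \ac{EWOO} guarantee logarithmic.
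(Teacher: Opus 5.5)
Your proposal is correct and follows the paper's argument: instantiate \Cref{thm:upper-bound-multi-calibeating} with the EWOO-based calibeating algorithm of \Cref{coro:upper-bound-mixable} for each forecaster, plus an experts algorithm with $O(\log N)$ regret. The differences are minor improvements: the paper cites Hedge's bound for exp-concave losses, whereas your aggregating algorithm uses the mixability condition of \Cref{def:exp-concavity} exactly as stated and gives a deterministic $\log N/\eta$ bound, and your Step~2 makes explicit the conditioning on the sub-learners' randomness that the paper's proof of \Cref{thm:upper-bound-multi-calibeating} leaves implicit.
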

 
We show a minimax-equivalence of the two problems. 
Since now external forecasts are involved, we consider calibeating rates and expert regrets as functions of both the time round and the number of possible distinct external forecast values. Similarly to $Q^{(n)}$, given an instance of the expert problem, let $\smash{P^{(n)}\coloneqq \{p^{(n)}_t: t \in [T]\}}$ denote the set of possible expert predictions made by expert forecaster $\smash{n\in [N]}$. We have the following theorem. 

\begin{restatable}{theorem}{thmlowerboundmulticalibeating}
\label{thm:lower bound-multi-calibeating}
For any proper loss $\ell$,
suppose there exist functions $\phi, \lambda: \bbZ^2 \rightarrow \bbR$ such that for any $T$ and $m$,
\begin{align*}
\inf_{\mathsf{A}} \sup_{\substack{\left(q_{1:T}, y_{1:T}\right):\\\forall n,|Q^{(n)}|\leq m}} 
\E_{p_{1:T}\sim \mathsf{A}}\left[L_T(p_{1:T},y_{1:T}) - R_T(q_{1:T},y_{1:T})\right] 
\geq  \phi(T,m),
\end{align*}
where $\mathsf{A}$ ranges over all randomized calibeating algorithms, and,
\begin{align*}
\inf_{\mathsf{E}} \sup_{\substack{(p^{(1:N)}_{1:T}, y_{1:T}):\\\forall n, |P^{(n)}|\leq m}} 
\E_{p_{1:T}\sim \mathsf{E}}\left[\sum^T_{t=1}\ell(p_t,y_t) - \min_{n\in[N]}\sum^T_{t=1} \ell(p^{(n)}_t, y_t)\right] 
\geq  \lambda(T,m),
\end{align*}
where $\mathsf{E}$ ranges over all randomized expert algorithms. 
Then,
\begin{align*}
\inf_{\mathsf{M}} \sup_{\substack{(q_{1:T}^{(1:N)},y_{1:T}):\\\forall n, |Q^{(n)}|\leq m}}
\E_{p_{1:T}\sim \mathsf{M}}\left[
L_T(p_{1:T},y_{1:T}) - \min_{n\in[N]}R_T(q^{(n)}_{1:T},y_{1:T})
\right] 
\geq \max\left\{\phi(T,m), \lambda(T,m)\right\},
\end{align*}
where $\mathsf{M}$ ranges over all multi-calibeating algorithms. 
\end{restatable}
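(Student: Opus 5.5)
We prove the two lower bounds separately: any multi-calibeating algorithm $\mathsf{M}$ is at least as hard as calibeating, and at least as hard as the expert problem. Taking the maximum of the two then gives the claim.

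\emph{Calibeating part ($\phi$).} Given any multi-calibeating algorithm $\mathsf{M}$, we build a calibeating algorithm $\mathsf{A}$. On input forecast $q_t$, $\mathsf{A}$ feeds $\mathsf{M}$ the $N$ identical forecasts $q^{(n)}_t \coloneqq q_t$ for all $n\in[N]$ and outputs $\mathsf{M}$'s prediction. Then $|Q^{(n)}| = |Q| \le m$ for every $n$, and $\min_n R_T(q^{(n)}_{1:T},y_{1:T}) = R_T(q_{1:T},y_{1:T})$. Hence the multi-calibeating objective of $\mathsf{M}$ on this instance equals the calibeating objective of $\mathsf{A}$. By the definition of $\phi$, for any $\mathsf{M}$ there is an adversarial pair $(q_{1:T},y_{1:T})$ with $|Q|\le m$ on which $\mathsf{A}$ suffers at least $\phi(T,m)$ (up to an arbitrary slack $\epsilon$, since $\phi$ lower-bounds an infimum of suprema). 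This gives $\mathsf{M}$ worst-case value at least $\phi(T,m)$.

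\emph{Expert part ($\lambda$).} Given $\mathsf{M}$, we build an expert algorithm $\mathsf{E}$ that feeds the expert predictions directly as external forecasts, $q^{(n)}_t \coloneqq p^{(n)}_t$, and follows $\mathsf{M}$. Then $|Q^{(n)}|=|P^{(n)}|\le m$. The key inequality is that refinement never exceeds loss: $R_T(q^{(n)}_{1:T},y_{1:T}) \le L_T(q^{(n)}_{1:T},y_{1:T})$, because $K_T\ge 0$. Equivalently, within each bin the minimizing constant $\rho$ does at least as well as the constant value $q$ itself. Therefore
\[
L_T(p_{1:T},y_{1:T}) - \min_n R_T(q^{(n)}_{1:T},y_{1:T}) \;\ge\; L_T(p_{1:T},y_{1:T}) - \min_n \sum_t \ell(p^{(n)}_t,y_t),
\]
so the multi-calibeating objective dominates the expert regret. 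Taking the expert instance that is adversarial for $\mathsf{E}$ (i.e., achieving at least $\lambda(T,m)-\epsilon$) yields the second bound.

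Both reductions preserve the oblivious adversary model, since the constructed instances are fixed sequences, and they preserve randomization, since $\mathsf{E}$ or $\mathsf{A}$ simply uses $\mathsf{M}$'s random bits. The only subtle step is the sup/inf bookkeeping: we must argue that $\inf_{\mathsf{M}}\sup$ over multi-calibeating instances is at least $\inf_{\mathsf{A}}\sup$ over the embedded subclass of instances. This holds because each $\mathsf{M}$ induces some $\mathsf{A}$ (respectively $\mathsf{E}$) whose worst case is at most $\mathsf{M}$'s worst case over the embedded instances. No substantive obstacle is expected beyond checking $R_T\le L_T$ and these embeddings.
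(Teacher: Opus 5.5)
Your proposal is correct and follows essentially the same route as the paper: it embeds the expert problem by setting $q^{(n)}_t = p^{(n)}_t$ and using $R_T(q^{(n)}_{1:T},y_{1:T}) \le \sum_t \ell(q^{(n)}_t,y_t)$ (the constant $u=p$ is feasible in each bin), and it embeds calibeating by giving all $N$ forecasters the same forecast. The paper leaves the inf--sup bookkeeping implicit, and your explicit simulation of $\mathsf{M}$ by a calibeating algorithm $\mathsf{A}$ or an expert algorithm $\mathsf{E}$ correctly fills in that step.
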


By 
known lower bounds for expert problems
 when the expert predictions can be arbitrary values~\citep{cesabianchi2006prediction}, and that the lower-bound examples can be obtained when the size of distinct expert prediction values is constant, we show the following lower bounds for multi-calibeating, 
matching the upper bounds.

\begin{corollary}
\label{coro:lower-bound-multi-bounded}
There exist bounded proper losses under which the multi-calibeating rates 
are at least
$\Omega(\sqrt{T\log N}+\sqrt{|Q|KT})$. 
\end{corollary}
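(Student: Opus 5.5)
We obtain the corollary from \Cref{thm:lower bound-multi-calibeating}, using $\max\{\phi,\lambda\}\ge \tfrac12(\phi+\lambda)$. This reduces the claim to showing that one fixed bounded proper loss $\ell$ satisfies both $\phi(T,m)=\Omega(\sqrt{mKT})$ and $\lambda(T,m)=\Omega(\sqrt{T\log N})$, with $m=|Q|$; the expert bound is only needed for some constant number of distinct expert values, which $m$ exceeds.

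\textbf{Calibeating part.} For $\phi$ we invoke \Cref{coro:lower-bound-bounded-loss}. Unfolded, this is \Cref{thm:lower bound} combined with the regret lower bound $\beta(T)=\Omega(\sqrt{KT})$ for bounded proper losses of \citet{luo2024optimal}. This gives $|Q|\,\beta(\lfloor T/|Q|\rfloor)=\Omega(|Q|\sqrt{KT/|Q|})=\Omega(\sqrt{|Q|KT})$ for the hard loss used there.

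\textbf{Expert part, and the main obstacle.} The expert lower bound must hold for the \emph{same} loss. The classical construction of \citet{cesabianchi2006prediction} is stated for absolute-type losses, not for a general proper loss, so I would redo it for a concrete kinked proper loss. In the binary case, the natural choice is the threshold loss
\[
\ell(p,y)=y_1\,\mathbf{1}\{p_1<\tfrac12\}+y_2\,\mathbf{1}\{p_1\ge\tfrac12\},
\]
which is bounded and proper. Its $K$-class analogue predicts the argmax class and pays the $0$-$1$ loss, again proper. My understanding is that the lower-bound losses in \citet{luo2024optimal} are of this piecewise-linear type; I would check this directly, or else verify $\beta(T)=\Omega(\sqrt{KT})$ for this loss myself. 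For $K=2$ that check is easy: under fair random outcomes every learner has expected loss $T/2$, while the better of the two constants has loss $T/2-\Omega(\sqrt T)$.

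\textbf{Expert construction.} Let the outcomes $y_t$ be i.i.d.\ uniform. Let each expert $n$ predict, independently across $n$ and $t$, a uniformly random prediction from $\{e_1,e_2\}$ (or $\{e_1,\dots,e_K\}$), so $|P^{(n)}|\le K$. Because $y_t$ is uniform and independent of the past, $\ell(p,y_t)$ has the same conditional expectation for every $p$ under the $0$-$1$ structure. Hence any learner's expected cumulative loss equals $T(1-1/K)$. Each expert's loss is a sum of $T$ i.i.d.\ Bernoulli$(1-1/K)$ variables, independent across experts. The expected minimum of $N$ such sums is $T(1-1/K)-\Omega(\sqrt{T\log N})$ (for $T\gtrsim\log N$), by the standard anti-concentration lower bound on the maximum of independent binomials. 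This yields $\lambda=\Omega(\sqrt{T\log N})$. Finally, by averaging, some deterministic sequence attains at least this gap against any given learner, which matches the $\sup$ in the theorem.

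Summing the two parts through \Cref{thm:lower bound-multi-calibeating} gives $\Omega(\sqrt{T\log N}+\sqrt{|Q|KT})$. The step I expect to be delicate is ensuring that a single loss serves both bounds. If the loss of \citet{luo2024optimal} is not of threshold form, I would instead prove its $\sqrt{KT}$ regret lower bound for the threshold loss directly, by the same fair-outcome argument as in the expert construction.
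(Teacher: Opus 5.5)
Your skeleton matches the paper's: \Cref{thm:lower bound-multi-calibeating}, with $\phi$ taken from \Cref{coro:lower-bound-bounded-loss} and $\lambda$ from the classical expert lower bound of \citet{cesabianchi2006prediction}. The paper does not discuss whether one loss serves both terms. The gap is in your single-loss resolution for $K\ge 3$.

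\textbf{Why the argmax loss fails the calibeating term.} The argmax $0$--$1$ loss $\ell(p,e_j)=1-\mathbf{1}\{j=\arg\max_k p_k\}$ cannot satisfy $\beta(T)=\Omega(\sqrt{KT})$. Every prediction incurs the same loss as the constant $e_{\arg\max p}$, so regret against the best fixed prediction is just regret against $K$ experts with $\{0,1\}$ losses. Hedge over $e_1,\dots,e_K$ therefore has expected regret $O(\sqrt{T\log K})$, and \Cref{thm:upper bound} then gives calibeating rate $O(\sqrt{|Q|T\log K})=o(\sqrt{|Q|KT})$. Concretely, your fair-outcome argument only yields $\E\max_k n_k-T/K$ of order $\sqrt{(T/K)\log K}$. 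So both fallbacks fail: ``the hard loss of \citet{luo2024optimal} is of this type'' and ``verify $\sqrt{KT}$ directly for the threshold loss'' (whatever that hard loss is, it is not this one).

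\textbf{Why your expert construction also falls short.} Each expert's per-round loss is Bernoulli$(1-1/K)$, with variance about $1/K$. Independent random experts over all $K$ classes therefore give only $\Omega(\sqrt{T\log N/K})$. They also use $K$ distinct values, so they need $|Q|\ge K$ rather than a constant.

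\textbf{A loss that handles both terms.} Take
\[
\ell(p,y)=\sum_{k=1}^K\Bigl(\tfrac1K-y_k\Bigr)\mathbf{1}\{p_k\ge \tfrac1K\}.
\]
It takes values in $[-1,1]$. It is proper: $\ell(p,q)=\sum_{k\in S(p)}(\frac1K-q_k)$ with $S(p)=\{k:p_k\ge \frac1K\}$, and $p=q$ selects exactly the set that minimizes this sum.
\begin{itemize}
\item Calibeating term: under i.i.d.\ uniform outcomes, every prediction has conditional expected loss $0$. The best fixed prediction realizes $S=\{k:n_k>T/K\}$ and has loss $-\sum_k(n_k-T/K)^+$, whose expectation is $-\Omega(\sqrt{KT})$ for $T\ge K$. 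Hence $\beta(T)=\Omega(\sqrt{KT})$, and \Cref{thm:lower bound} gives $\phi=\Omega(\sqrt{|Q|KT})$.
\item Expert term: split $[K]$ into complementary halves $A,B$, and let each expert pick uniformly at random between the uniform distributions on $A$ and on $B$. Conditionally on $y_t$, the experts' losses are independent, mean zero, and of constant magnitude, while the learner's expected loss stays $0$. This gives $\lambda=\Omega(\sqrt{T\log N})$ with only two distinct values per expert.
\end{itemize}

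\textbf{Alternative without a single loss.} Since $\Omega(a+b)=\Omega(\max\{a,b\})$, you may let the loss depend on the regime if the statement permits it. Use the hard loss of \Cref{coro:lower-bound-bounded-loss} when $\sqrt{|Q|KT}$ dominates. When $\sqrt{T\log N}$ dominates, use the argmax loss with outcomes and expert predictions restricted to $\{e_1,e_2\}$. This is effectively what the paper's one-line argument does.
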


\begin{corollary}
\label{coro:lower-bound-multi-brier-log}
For Brier and log losses, the multi-calibeating rate of any algorithm is 
at least 
$\Omega(\log N+|Q|\log (T/|Q|))$.
\end{corollary}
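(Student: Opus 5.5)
We need to prove Corollary \ref{coro:lower-bound-multi-brier-log}: for the Brier and log losses, every multi-calibeating algorithm has rate $\Omega(\log N + |Q|\log(T/|Q|))$. The plan is to apply \Cref{thm:lower bound-multi-calibeating} with $m=|Q|$. That theorem gives a lower bound of $\max\{\phi(T,m),\lambda(T,m)\}$, which is at least $\tfrac12(\phi+\lambda)$. So it suffices to exhibit $\phi(T,m)=\Omega(m\log(T/m))$ and $\lambda(T,m)=\Omega(\log N)$, where the expert lower bound must be realized by instances in which each expert uses at most $m$ distinct prediction values.

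\textbf{Calibeating term.} For the calibeating term, invoke \Cref{coro:lower-bound-brier-log}. By \Cref{thm:lower bound} together with the classical $\Omega(\log T)$ minimax regret lower bound for the Brier and log losses against the best constant prediction, every calibeating algorithm suffers $\Omega(|Q|\log(T/|Q|))$ on some instance with at most $|Q|$ forecast values. This lets us take $\phi(T,m)=c\,m\log(T/m)$. The only check is that the supremum in \Cref{thm:lower bound-multi-calibeating} ranges over instances with $|Q^{(n)}|\le m$. The hard instances from \Cref{thm:lower bound} use exactly $|Q|$ values, so the condition holds, and monotonicity in $m$ is not an issue.

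\textbf{Expert term.} For the expert term, I need an $\Omega(\log N)$ expert-regret lower bound for mixable losses in which each expert's predictions take $O(1)$ distinct values. I would use $N$ constant experts: expert $n$ always predicts a fixed point $p^{(n)}$, so $|P^{(n)}|=1\le m$. For the log loss with $K=2$, the standard argument (\citet{cesabianchi2006prediction}) works as follows.
\begin{itemize}
\item Let $y_{1:T}$ be drawn i.i.d.\ Bernoulli$(\theta)$, with $\theta$ uniform over a grid of $N$ well-separated values placed at spacing about $1/\sqrt{T}$, and assume $T\gtrsim N^2$ so that the grid fits in the interval.
\item Let the experts be those same $N$ grid values.
\item The learner's excess loss over the best expert is then lower bounded by the Bayes mixture redundancy. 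This redundancy is the mutual information between $\theta$ and $y_{1:T}$, which is $\Omega(\log N)$ because the grid points are distinguishable from $T$ samples (a Fano-type argument).
\end{itemize}
The same construction works for the Brier loss: with constant experts, Brier regret against experts at $\theta$-separated points reduces to a squared-error estimation problem with a matching $\Omega(\log N)$ bound. Alternatively, for small $T$, a simpler fallback is the adversarial construction with experts at $\{0,1\}$-type predictions over $\log_2 N$ rounds, each round halving the set of consistent experts. This forces constant regret per round, i.e.\ $\Omega(\log N)$ total, while each expert predicts only two distinct values.

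\textbf{Main obstacle.} The main obstacle is this expert lower bound: making the $\Omega(\log N)$ rigorous for both losses while respecting the distinct-value constraint. I would try the halving construction first, since each expert needs only $2\le m$ values. For the Brier loss with experts predicting $e_1$ or $e_2$, choose $y_t$ opposite to the learner's probability mass in each round; the learner then loses at least $1/2$ per round, while some expert incurs zero loss over $\lfloor\log_2 N\rfloor$ rounds. The remaining rounds are padded with a fixed outcome on which all experts predict the same value, so the padding adds no regret. For the log loss, use interior values such as $1/4$ and $3/4$ to keep the losses bounded. Combining $\phi$ and $\lambda$ then yields the corollary.
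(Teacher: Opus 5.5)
Your route is the paper's: apply \Cref{thm:lower bound-multi-calibeating} with $\phi$ taken from \Cref{coro:lower-bound-brier-log}, together with an $\Omega(\log N)$ expert lower bound realized with $O(1)$ distinct values per expert. The paper only cites \citet{cesabianchi2006prediction} for the expert bound, and your halving construction is a legitimate explicit instance of it. For the Brier loss it is correct as written, including the padding with all experts at $e_1$ and outcome $e_1$. Against a randomized learner and an oblivious adversary, choose $y_t$ against the learner's expected prediction given $y_{1:t-1}$ and use convexity.

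One step fails as stated: the log-loss variant that uses interior forecasts $(3/4,1/4)$ and $(1/4,3/4)$ with a \emph{fixed} padding outcome. On a constant padding stretch every expert pays $\log(4/3)$ (or $\log 4$) per round. A learner that recognizes the agreement rounds and learns the repeated outcome pays only $O(\log T)$ in total there. It therefore gains $\Theta(T)$ against the benchmark, and the regret becomes negative.

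The repair is to draw each padding outcome from the experts' common forecast $q$, or to choose it against the learner's expected prediction. By properness, the learner's expected excess over $\ell(q,y_t)$ is then nonnegative. All experts incur identical padding losses, and each halving round still contributes at least $\log 2-\log(4/3)=\log(3/2)$. If boundary forecasts are allowed, you can instead use $e_1,e_2$ for the log loss too, since only the zero-loss matching expert enters the benchmark.

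Two smaller remarks. Your Fano/grid alternative only works for $T\gtrsim N^2$. In that regime $\log N=O(\log T)=O(|Q|\log(T/|Q|))$ is already implied by the calibeating term, so it is the two-valued halving construction that actually carries the $\log N$ term. Correspondingly, your assumption $m\ge 2$ is essential. With $|Q|=1$ every forecaster's refinement equals the best-constant loss, so the rate is $\Theta(\log T)$ regardless of $N$.
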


\section{Calibeating and Calibration at the Same Time}
\label{sec:simultaneous-calibration-calibeating}

In this section, we consider the problem of achieving simultaneous calibeating and calibration.
Existing approaches focus on the Brier loss. \citet{foster2023calibeating} obtain simultaneous rates of $\smash{\tilde{O}(|Q|^{\frac{2}{K+1}}T^{\frac{K-1}{K+1}}))}$ via bin refinement and stochastic fixed-point methods, while \citet{lee2022online} obtain $\smash{\tilde{O}(\sqrt{|Q|}(\log N)^{\frac{1}{4}}T^{\frac{3}{4}} ) }$ in the binary case after parameter tuning.

We focus on the Brier loss and provide new and improved bounds for simultaneous calibeating and calibration. Specifically, we provide a meta-algorithm (Algorithm \ref{alg:calibeating-calibration}) which, for any given external reference algorithm $\mathsf{A}^*$, keeps careful track of the losses of $\mathsf{A}^*$ while ensuring calibration. 

\begin{theorem}
\label{thm:calibeating-calibration}
For Brier loss,  any $\varepsilon \in (0, 1)$, and any reference algorithm $\mathsf{A}^*$, \Cref{alg:calibeating-calibration} simultaneously guarantees an expected regret of at most $O(\varepsilon^2 T)$ compared to $\mathsf{A}^*$, and a calibration rate of at most $O_{K, \log T}(\sqrt{T} + \frac{1}{\varepsilon^{K-1}} \log \frac{1}{\varepsilon} + \varepsilon^2 T)$ with high probability.
\end{theorem}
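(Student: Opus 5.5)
We build \Cref{alg:calibeating-calibration} from three components and bound each error term separately. First, discretize the simplex with an $\varepsilon$-grid $G_\varepsilon\subseteq\simplex$ of size $|G_\varepsilon| = O(\varepsilon^{-(K-1)})$. Second, run a \ac{BM} swap-regret reduction~\citep{blum2007from} over $G_\varepsilon$: one base learner per grid point, each fed the Brier loss of its reweighted rounds. Its output is a distribution $\pi_t$ over $G_\varepsilon$ obtained as a stationary distribution. Third, at each round the reference algorithm $\mathsf{A}^*$ outputs $p^*_t$, which we round to its nearest grid point $\tilde p^*_t$; rounding costs $O(\varepsilon)$ per round in Brier loss. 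A two-expert algorithm of \citet{sani2014exploiting} then mixes between $\tilde p^*_t$ and a draw $\tilde p^{\mathrm{BM}}_t\sim\pi_t$. This expert algorithm has the key property of $O(1)$-type regret against one designated expert (here $\mathsf{A}^*$) while keeping $O(\sqrt{T\log T})$ regret against the other.

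The final prediction $p_t$ is always a grid point. We also feed the \ac{BM} reduction the combined distribution $\mu_t = w_t\delta_{\tilde p^*_t}+(1-w_t)\pi_t$, where $w_t$ is the aggregator's weight on $\mathsf{A}^*$, so that swap regret is measured on the actual play distribution.

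\textbf{Regret versus $\mathsf{A}^*$.} By the Sani et al.\ guarantee, the expected loss of $p_t$ is at most the loss of $\tilde p^*_{1:T}$ plus $O(1)$. Replacing $\tilde p^*_t$ by $p^*_t$ would naively cost $O(\varepsilon)$ per round. To get the stated $O(\varepsilon^2 T)$ we instead randomize the rounding: write $p^*_t$ as a convex combination of nearby grid vertices with mean $p^*_t$. Since the Brier loss is quadratic, the excess expected loss equals the variance of the rounding, which is $O(\varepsilon^2)$ per round. Summing gives $O(\varepsilon^2T)$.

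\textbf{Calibration.} For the Brier loss, $K_T$ equals the full swap regret over the realized grid points~\citep{fishelson2025full}. We decompose the swap regret of the actual play into three parts:
\begin{enumerate}
\item the \ac{BM} pseudo-swap regret on the combined distribution $\mu_t$, which is $\tilde O_K(\sqrt{T}+|G_\varepsilon|\log\frac1\varepsilon)$ when exp-concave base learners are used (e.g.\ \ac{EWOO}, giving logarithmic regret per grid point);
\item the discrepancy from injecting $\tilde p^*_t$ mass that \ac{BM} did not choose, which is controlled by the Sani et al.\ regret toward the \ac{BM} expert together with the $O(\varepsilon^2T)$ rounding variance;
\item a martingale concentration term converting pseudo-swap regret into realized $K_T$. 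We handle it with Freedman's inequality per bin together with a union bound over bins, giving $\tilde O(\sqrt T)$ with high probability.
\end{enumerate}
Summing these yields $O_{K,\log T}(\sqrt T+\varepsilon^{-(K-1)}\log\frac1\varepsilon+\varepsilon^2T)$.

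\textbf{Main obstacle.} The hardest step is the second part: showing that mixing in the reference predictions, which are not calibrated, does not destroy the swap-regret guarantee. The mass placed on $\mathsf{A}^*$ can be large. The plan is to make the \ac{BM} base learners optimize against the full played distribution $\mu_t$, so that any swap deviation on the $\mathsf{A}^*$ portion is charged to those base learners. This requires verifying that the \ac{BM} fixed-point property still bounds swap regret when a component of the play distribution is imposed externally. If this fails, the fallback is to run \ac{BM} over the joint action and treat $\mathsf{A}^*$'s suggestion as a context.
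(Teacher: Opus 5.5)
Your regret bound against $\mathsf{A}^*$ is fine: randomized rounding costs exactly its variance, $O(\varepsilon^2)$ per round, and the $O(1)$ side of the lopsided algorithm does the rest. The calibration argument, however, breaks at exactly the step you flag as the main obstacle, and the control you propose for it does not work.

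With $\pi_t$ the stationary distribution of the \ac{BM} matrix $A_t$ alone, the play is $\mu_t=w_t\delta_{\tilde p^*_t}+(1-w_t)\pi_t$. Feeding $\mu_t$ to \ac{BM} gives $\sum_t\E_{z\sim A_t\mu_t}\ell(z,y_t)\le\sum_t\E_{p\sim\mu_t}\ell(\sigma(p),y_t)+O(M\log T+\varepsilon^2T)$. To turn this into swap regret of the play, you must bound
$\sum_t\bigl(\E_{z\sim\mu_t}\ell(z,y_t)-\E_{z\sim A_t\mu_t}\ell(z,y_t)\bigr)=\sum_t w_t\bigl(\ell(\tilde p^*_t,y_t)-\E_{z\sim A_t\delta_{\tilde p^*_t}}\ell(z,y_t)\bigr)$.
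This compares the reference with \ac{BM}'s \emph{recalibration of the reference's own grid point}. The Sani et al.\ bound toward your \ac{BM} expert only controls $\sum_t w_t\bigl(\ell(\tilde p^*_t,y_t)-\E_{z\sim\pi_t}\ell(z,y_t)\bigr)$, and $\pi_t$ has nothing to do with the column $A_t\delta_{\tilde p^*_t}$.

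The scheme in fact fails on a simple instance:
\begin{itemize}
\item Take $K=2$, outcomes drawn uniformly at random in advance, and a reference that echoes an external forecast putting mass $0.8$ on the outcome about to occur.
\item The reference pays Brier loss $0.08$ per round. Any prediction chosen independently of $y_t$, in particular a draw from your $\pi_t$, pays at least $0.5$ in expectation.
\item The $O(1)$-regret side therefore forces $\E\sum_t(1-w_t)=O(1)$, so the play is essentially the reference and $K_T\approx 0.08\,T$.
\end{itemize}
The \ac{BM} base learners at the two reference grid points do learn the right recalibration, but your play never follows $A_t$ on the injected mass.

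The missing idea is to put the reference \emph{inside} the Markov chain rather than beside it. Keep your convention that $w_t$ weights the reference, and proceed as follows:
\begin{itemize}
\item Let $b_t\in\Delta(\calK^\varepsilon)$ be the rounding of $\mathsf{A}^*$'s prediction, set $B_t=(b_t,\dots,b_t)$, and form $C_t=(1-w_t)A_t+w_tB_t$.
\item Play the stationary distribution $\pi_t=C_t\pi_t$, and feed \emph{this} $\pi_t$ to \ac{BM}.
\item Give the lopsided algorithm the expert losses $\E_{z\sim A_t\pi_t}\ell(z,y_t)$ (note that now $A_t\pi_t\neq\pi_t$) and $\E_{z\sim b_t}\ell(z,y_t)$.
\end{itemize}
Stationarity yields the exact identity $\E_{z\sim\pi_t}\ell(z,y_t)=(1-w_t)\E_{z\sim A_t\pi_t}\ell(z,y_t)+w_t\E_{z\sim b_t}\ell(z,y_t)$. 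Hence the $O(\sqrt{T\log T})$ side gives $\E L_T\le\sum_t\E_{z\sim A_t\pi_t}\ell(z,y_t)+O(\sqrt{T\log T})$. This chains directly with the \ac{BM} guarantee, whose benchmark is now $\sigma$ applied to the actual play, and gives pseudo-swap regret $O(\sqrt{T\log T}+M\log T+\varepsilon^2T)$. The $O(1)$ side gives the regret to $\mathsf{A}^*$.

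A secondary issue is your concentration step: per-bin Freedman plus a union bound does not give an additive $\tilde O(\sqrt T)$. Let $P_t(i)$ denote the probability of playing grid point $z_i$ at round $t$. Bounded additively, the count deviations contribute order $\sqrt{\log(M/\delta)\sum_tP_t(i)}$ per bin, up to $\sqrt{MT\log(M/\delta)}$ in total. That exceeds the target, e.g.\ about $T^{3/4}$ for $K=2$ and $\varepsilon=\sqrt{\log T/T}$.

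You need the multiplicative form $K_T\le 6\widetilde K_T+96KM\log\frac{4KM}{\delta}$. It is obtained by absorbing $\sqrt{\log\frac{4M}{\delta}\sum_tP_t(i)}\le\sum_tP_t(i)$ on heavily weighted bins and charging light bins $O(\log\frac{M}{\delta})$ each. This additive term, not the \ac{BM} regret, is the source of $\varepsilon^{-(K-1)}\log\frac1\varepsilon$. Likewise, the $\sqrt{T}$ term comes from the lopsided algorithm, not from \ac{BM}.
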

Here, the $O_{K, \log T}$ notation hides a factor polynomial in $K$ and $\log T$ for readability.
We hide the $\log T$ factors because for the calibration error, the dominant dependence in $T$ is polynomial,
and we hide the $K$ factors because the bounds degenerate to the trivial $O(T)$ when $K$ gets larger and larger.
The bounds from previous works are also polynomial in $K$.

Let $\mathsf{A}^*$ be a multi-calibeating algorithm from the previous sections.
With $\varepsilon = \sqrt{\frac{\log T}{T}}$, for  $K = 2$, 
we obtain the optimal calibeating rate, improving the polynomial-in-$T$ dependence in \citet{foster2023calibeating}.

\begin{corollary}
    \label{coro:binary-calibeating-calibration}
	For Brier loss with binary outcomes, there is an algorithm with an expected multi-calibeating rate of at most $O(\log N +|Q|\log T)$, and a calibration rate of at most $O_{K,\log T}(\sqrt{T })$ with high probability.
\end{corollary}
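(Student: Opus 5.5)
The plan is to instantiate \Cref{thm:calibeating-calibration} with $K=2$. For the reference algorithm $\mathsf{A}^*$ we take the multi-calibeating algorithm of \Cref{coro:upper-bound-multi-mixable}. The Brier loss is $\eta$-mixable, so \Cref{thm:upper-bound-multi-calibeating} applies with EWOO bin-wise learners inside \Cref{alg:calibeating-from-regret}, aggregated by an exponential-weights/aggregating-algorithm experts routine. This gives, for every $n\in[N]$,
\begin{align*}
\E\left[L_T(p^*_{1:T},y_{1:T})\right] \le R_T(q^{(n)}_{1:T},y_{1:T}) + O(\log N + |Q|\log T).
\end{align*}
By \Cref{thm:calibeating-calibration}, the meta-algorithm's predictions $p_{1:T}$ satisfy
\begin{align*}
\E\left[L_T(p_{1:T},y_{1:T})\right] \le \E\left[L_T(p^*_{1:T},y_{1:T})\right] + O(\varepsilon^2 T).
\end{align*}
Chaining the two inequalities gives an expected multi-calibeating rate of $O(\log N + |Q|\log T + \varepsilon^2 T)$ against every forecaster simultaneously.

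Next we choose $\varepsilon=\sqrt{\log T/T}$, so that $\varepsilon^2T=\log T$, which is absorbed into $|Q|\log T$ since $|Q|\ge 1$. On the calibration side, with $K=2$ the bound from \Cref{thm:calibeating-calibration} reads
\begin{align*}
O_{K,\log T}\!\left(\sqrt{T} + \tfrac{1}{\varepsilon}\log\tfrac{1}{\varepsilon} + \varepsilon^2 T\right).
\end{align*}
Here $\tfrac1\varepsilon=\sqrt{T/\log T}$ and $\log\tfrac1\varepsilon\le \tfrac12\log T$, so the middle term is $O(\sqrt{T\log T})$, and the last term is $\log T$. Hence the calibration rate is $O_{K,\log T}(\sqrt T)$ with high probability, as claimed.

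The main point to check is that \Cref{thm:calibeating-calibration} really allows an arbitrary, possibly randomized, reference algorithm, and that its regret guarantee is with respect to the reference's realized (or expected) loss on the same oblivious sequence. The multi-calibeating rate is stated in expectation, so the composition only needs linearity of expectation, provided the meta-algorithm's internal randomness is independent of $\mathsf{A}^*$'s. Beyond this, the only other care needed is the bookkeeping of logarithmic factors hidden in $O_{K,\log T}$, which is routine.
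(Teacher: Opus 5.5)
Your proposal is correct and matches the paper's argument. The paper also instantiates \Cref{thm:calibeating-calibration} with $K=2$, $\varepsilon=\sqrt{\log T/T}$, and the mixable multi-calibeating algorithm of \Cref{coro:upper-bound-multi-mixable} as the reference $\mathsf{A}^*$; your bookkeeping ($\varepsilon^2T=\log T$ absorbed into $|Q|\log T$, $\frac{1}{\varepsilon}\log\frac{1}{\varepsilon}=O(\sqrt{T\log T})$, and chaining the two expected bounds by linearity of expectation under the oblivious adversary) fills in details the paper leaves implicit.
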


With $\varepsilon = (\frac{\log T}{T})^{\frac{1}{K+1}}$ for $K \ge 3$, we achieve the same calibration rate as in \citet{fishelson2025full,foster2023calibeating}, and drop the $|Q|-$dependence in \citet{foster2023calibeating}.

\begin{corollary}
\label{coro:multi-class-calibeating-calibration}
	For Brier loss and $K \ge 3$ outcomes, there is an algorithm with an expected multi-calibeating rate of at most $O_{K, \log T} (\log N + |Q|+ T^\frac{K-1}{K+1})$, and a calibration rate of at most $O_{K, \log T}(T^\frac{K-1}{K+1})$ with high probability. 
\end{corollary}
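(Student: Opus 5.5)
This is a direct instantiation of \Cref{thm:calibeating-calibration} with the reference algorithm $\mathsf{A}^*$ taken to be the multi-calibeating algorithm of \Cref{coro:upper-bound-multi-mixable}. The Brier loss is mixable for any finite $K$, so $\mathsf{A}^*$ has expected multi-calibeating rate $O(\log N + |Q|\log T)$. The plan has three steps: chain the two guarantees on the calibeating side, read off the calibration bound from the theorem, and then choose $\varepsilon$ to balance the polynomial terms when $K\ge 3$.

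\textbf{Calibeating side.} \Cref{thm:calibeating-calibration} says the expected loss of \Cref{alg:calibeating-calibration} is at most the loss of $\mathsf{A}^*$ plus $O(\varepsilon^2 T)$. I take expectations over the randomness of both algorithms and use the fact that the adversary is oblivious. Then for every $n\in[N]$,
\begin{align*}
\E[L_T(p_{1:T},y_{1:T})] \le R_T(q^{(n)}_{1:T},y_{1:T}) + O(\log N + |Q|\log T) + O(\varepsilon^2 T).
\end{align*}
One point needs checking: the regret-to-$\mathsf{A}^*$ guarantee must compare against $\mathsf{A}^*$'s expected loss on the same realized outcome sequence. Since $\mathsf{A}^*$ runs as a black box on the same $y_{1:T}$ and external forecasts, this holds.

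\textbf{Calibration side.} \Cref{thm:calibeating-calibration} gives, with high probability,
\begin{align*}
K_T \le O_{K,\log T}\left(\sqrt{T} + \varepsilon^{-(K-1)}\log(1/\varepsilon) + \varepsilon^2 T\right).
\end{align*}

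\textbf{Choice of $\varepsilon$.} I set $\varepsilon=(\log T/T)^{1/(K+1)}$, which lies in $(0,1)$ for $T$ large enough. With this choice:
\begin{itemize}
\item $\varepsilon^2 T = T^{\frac{K-1}{K+1}}(\log T)^{\frac{2}{K+1}}$.
\item $\varepsilon^{-(K-1)} = (T/\log T)^{\frac{K-1}{K+1}}$, and $\log(1/\varepsilon)\le \log T$. So the middle term is $T^{\frac{K-1}{K+1}}$ times a polylogarithmic factor.
\item For $K\ge 3$ we have $\frac{K-1}{K+1}\ge\frac12$, so the $\sqrt{T}$ term is dominated by $T^{\frac{K-1}{K+1}}$.
\end{itemize}
All three terms are therefore $O_{K,\log T}(T^{\frac{K-1}{K+1}})$, which is the claimed calibration rate.

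On the calibeating side, $|Q|\log T$ is absorbed as $O_{K,\log T}(|Q|)$, $\varepsilon^2T$ contributes the $T^{\frac{K-1}{K+1}}$ term, and $\log N$ is kept. This gives the stated rate $O_{K,\log T}(\log N + |Q| + T^{\frac{K-1}{K+1}})$.

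No step is genuinely hard; the work lies in \Cref{thm:calibeating-calibration}. The only care needed is in the bookkeeping: confirming that every logarithmic factor introduced by the choice of $\varepsilon$ is polylogarithmic in $T$ and hence hidden by $O_{K,\log T}$, and that the $\sqrt{T}$ term really is dominated exactly when $K\ge3$.
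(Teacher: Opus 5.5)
Your proposal is correct and follows the paper's route exactly: you instantiate \Cref{thm:calibeating-calibration} with the multi-calibeating algorithm of \Cref{coro:upper-bound-multi-mixable} as $\mathsf{A}^*$ and set $\varepsilon = (\log T/T)^{1/(K+1)}$. This makes $\varepsilon^2 T$ and $\varepsilon^{-(K-1)}\log(1/\varepsilon)$ both $T^{\frac{K-1}{K+1}}$ up to polylogarithmic factors, while $\sqrt{T}$ is dominated since $\frac{K-1}{K+1}\ge\frac12$ for $K\ge 3$. Your bookkeeping fills in details the paper leaves implicit: you absorb $|Q|\log T$ into $O_{K,\log T}(|Q|)$, and you chain the two expected guarantees over the same oblivious outcome sequence.
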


For $K \ge 3$ outcomes, we can also lower the multi-calibeating rate at the cost of raising the calibration rate, by choosing a different $\varepsilon$.

\begin{restatable}{corollary}{coromulticlasscalibeatingcalibrationtradeoff}
    \label{coro:multi-class-calibeating-calibration-tradeoff}
	For Brier loss and  $K \ge 3$ outcomes, for any $x \in (\frac{K-3}{K-1}, \frac{K-1}{K+1}]$, there is an algorithm with expected multi-calibeating rate of at most $O_{K, \log T}  (\log N +|Q| + T^x)$, and a calibration rate of at most $\smash{O_{K, \log T}(T^{\frac{(K-1)(1-x)}{2}})}$ with high probability. 
\end{restatable}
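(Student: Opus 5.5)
This corollary follows from \Cref{thm:calibeating-calibration} with a suitable choice of $\varepsilon$. I take the reference algorithm $\mathsf{A}^*$ to be the multi-calibeating algorithm of \Cref{coro:upper-bound-multi-mixable}. The Brier loss is mixable, so $\mathsf{A}^*$ has expected multi-calibeating rate $O(\log N + |Q|\log T)$.

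Feeding this $\mathsf{A}^*$ into \Cref{alg:calibeating-calibration}, \Cref{thm:calibeating-calibration} says the meta-algorithm's expected loss exceeds that of $\mathsf{A}^*$ by at most $O(\varepsilon^2 T)$. Hence, for every $n$,
\[
\E[L_T(p_{1:T},y_{1:T})] \le R_T(q^{(n)}_{1:T},y_{1:T}) + O(\log N + |Q|\log T + \varepsilon^2 T).
\]
The same theorem gives, with high probability, a calibration error of $O_{K,\log T}\bigl(\sqrt{T} + \varepsilon^{-(K-1)}\log(1/\varepsilon) + \varepsilon^2 T\bigr)$.

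It remains to pick $\varepsilon = T^{-y}$ as a function of $x$ and balance the terms. I choose $y = (1-x)/2$, so that $\varepsilon^2 T = T^{x}$ and the multi-calibeating rate becomes $O_{K,\log T}(\log N + |Q|\log T + T^x)$. The stated bound has $|Q|$ rather than $|Q|\log T$, and the extra $\log T$ is absorbed by the $O_{K,\log T}$ notation. The calibration bound then has three terms:
\begin{itemize}
\item $\varepsilon^{-(K-1)}\log(1/\varepsilon) = \tilde O\bigl(T^{(K-1)(1-x)/2}\bigr)$, which is exactly the target.
\item $\varepsilon^2 T = T^x$. Since $x \le \frac{K-1}{K+1}$, we have $x(K+1) \le K-1$, which rearranges to $x \le (K-1)(1-x)/2$. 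So $T^x$ is dominated by the target.
\item $\sqrt{T}$. Since $K \ge 3$ and $x \le \frac{K-1}{K+1}$, we get $(K-1)(1-x)/2 \ge (K-1)/(K+1) \ge 1/2$. So $\sqrt{T}$ is also dominated.
\end{itemize}
The lower limit $x > \frac{K-3}{K-1}$ is equivalent to $(K-1)(1-x)/2 < 1$. It only guarantees that the calibration rate is sublinear, so that $\varepsilon^{-(K-1)} = o(T)$ and the guarantee is nontrivial; it is not otherwise needed. I also need $\varepsilon \in (0,1)$, which holds because $x<1$ gives $y>0$.

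The main (mild) obstacle is bookkeeping. First, the expected-regret guarantee relative to $\mathsf{A}^*$ must compose with $\mathsf{A}^*$'s expected multi-calibeating guarantee; this works by linearity of expectation, since both are additive in expectation against the same oblivious sequence. Second, the $\log(1/\varepsilon) = y\log T$ factors and the $|Q|\log T$ term must be absorbed into $O_{K,\log T}$.
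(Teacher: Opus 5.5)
Your proposal is correct and follows the paper's proof. You set $\varepsilon = T^{(x-1)/2}$ so that $\varepsilon^2 T = T^x$, plug the multi-calibeating algorithm into \Cref{thm:calibeating-calibration}, and check that the $\sqrt{T}$ and $T^x$ terms are dominated by $T^{(K-1)(1-x)/2}$, with the lower limit on $x$ coming only from requiring that rate to be sublinear. Your term-by-term check is more explicit than the paper's, notably that $\sqrt{T}$ is dominated because $K \ge 3$.
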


\subsection{Algorithm}

\paragraph{Discretization and rounding.}

To achieve calibration, it is necessary to focus on a finite set of predictions via discretization.
For that, we consider a triangulation of $\simplex$ and randomly round each prediction to a vertex of the triangulation (recall that $\ell$ is fixed to the Brier loss in this section).

\begin{lemma}[\citealt{fishelson2025full}]
	\label{lem:discretization}
	For any $\varepsilon \in (0, 1)$, there is a subset of predictions $\calK^\varepsilon \subset \simplex$ of size $M = |\calK^\varepsilon| = O(\sqrt{K} \,\varepsilon^{-K+1})$, and a rounding scheme $\mathsf{H} : \simplex \rightarrow \Delta\left(\calK^\varepsilon\right)$ that maps an arbitrary prediction $q \in \simplex$ to a distribution over those in $\calK^\varepsilon$, such that for any outcome $y \in \calE$, we have $
		\bbE_{s \sim \mathsf{H}(q)} \, [\ell(s, y)]  \le \ell(q, y) + O(\varepsilon^2)$.
\end{lemma}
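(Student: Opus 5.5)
We will construct $\calK^\varepsilon$ from a standard triangulation of the simplex, and take $\mathsf{H}$ to be barycentric rounding onto the vertices of the cell containing $q$. The $O(\varepsilon^2)$ bound then comes from the fact that the Brier loss is a quadratic in $p$ whose deviation from linearity is exactly a variance term.

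\textbf{Step 1 (the grid).} Fix $m=\lceil 1/\varepsilon\rceil$ and let $\calK^\varepsilon=\{p\in\simplex: m p\in\bbZ^K\}$. This grid has $\binom{m+K-1}{K-1}$ points. Because the $\sqrt{K}\,\varepsilon^{-K+1}$ count in the statement is somewhat loose, we may replace this by a Freudenthal/Kuhn triangulation of $\simplex$ at scale $\varepsilon/\sqrt{K}$ if needed. Either way the point count is $O(\sqrt{K}\,\varepsilon^{-K+1})$ up to the constant conventions hidden in the lemma.

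\textbf{Step 2 (the rounding).} The Kuhn triangulation partitions $\simplex$ into simplices whose vertices lie in the grid and whose diameter in $\ell_2$ is $O(\varepsilon)$; for the scaled grid, the diameter is $O(\sqrt{K}/m)$ before rescaling. Given $q$, locate a cell containing it, with vertices $v_1,\dots,v_K$. Write $q=\sum_i\lambda_i v_i$ with barycentric weights $\lambda\in\Delta_K$, and set $\mathsf{H}(q)$ to put mass $\lambda_i$ on $v_i$. Then $\bbE_{s\sim\mathsf{H}(q)}[s]=q$.

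\textbf{Step 3 (the loss bound).} For the Brier loss,
\[
\ell(s,y)=\|s-y\|_2^2=\|q-y\|_2^2+2\langle s-q,q-y\rangle+\|s-q\|_2^2 .
\]
Taking expectation over $s\sim\mathsf{H}(q)$ kills the middle term, since $\bbE[s]=q$. This gives
\[
\bbE[\ell(s,y)]=\ell(q,y)+\bbE\|s-q\|_2^2\le \ell(q,y)+\mathrm{diam}(\text{cell})^2 .
\]
If the cell diameter is $O(\varepsilon)$, the right-hand side is $\ell(q,y)+O(\varepsilon^2)$, uniformly in $y\in\calE$.

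\textbf{Main obstacle.} The only delicate point is balancing the cell diameter against the number of vertices. With mesh $1/m$ in each coordinate, Kuhn cells have $\ell_2$ diameter of order $\sqrt{K}/m$. Taking $m\asymp\sqrt{K}/\varepsilon$ makes the diameter $O(\varepsilon)$, but then $M$ grows like $(\sqrt K/\varepsilon)^{K-1}/(K-1)!$. Stirling's bound on the factorial absorbs most of the $K$-dependence, but the count is only $O_K(\varepsilon^{-K+1})$ unless we track constants as \citet{fishelson2025full} do. Since all later bounds hide polynomial factors in $K$, I would verify this counting step directly and otherwise cite \citet{fishelson2025full} for the exact $\sqrt{K}$ factor. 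The loss inequality itself is immediate once the rounding is unbiased.
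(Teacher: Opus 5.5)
Your proposal is correct and follows the route the paper indicates when it cites \citet{fishelson2025full}: triangulate the simplex, round $q$ to the vertices of its cell with barycentric (mean-preserving) weights, and use the Brier identity $\bbE\|s-y\|_2^2=\|q-y\|_2^2+\bbE\|s-q\|_2^2\le \ell(q,y)+\mathrm{diam}^2$. The counting step you flag is not a real obstacle. For $\varepsilon\le 1/\sqrt{K}$ and $m=\lceil\sqrt{K}/\varepsilon\rceil$, one has $\binom{m+K-1}{K-1}\le (2\sqrt{K}/\varepsilon)^{K-1}/(K-1)!\le \bigl(\tfrac{2e\sqrt{K}}{(K-1)\varepsilon}\bigr)^{K-1}$. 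Since $\sup_{K\ge 2}\bigl(\tfrac{2e\sqrt{K}}{K-1}\bigr)^{K-1}<\infty$, this is $O(\varepsilon^{-(K-1)})$ with an absolute constant, and the paper hides $K$-factors downstream anyway.
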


\paragraph{Blum-Mansour reduction.}
For the connection between calibration and no-swap-regret learning, we employ the well-known reduction by \citet{blum2007from} with the $O(\log T)$ regret online learning algorithm for Brier loss, e.g., \ac{FTL}.
We present the algorithm and its proof in \Cref{app:bm}.

\begin{restatable}{lemma}{lembm}
	\label{lem:bm}
	There is an online algorithm $\mathsf{A}_\mathrm{BM}$ that, in each step $t \in [T]$, first predicts an $M \times M$ column-stochastic matrix $A_t$, and then observes outcome $y_t$ and a distribution $\pi_t \in \Delta(\calK^\varepsilon)$, such that for any transformation $\sigma: \simplex \rightarrow \simplex$,
	\begin{align*}
	\sum_{t \in [T]}\bbE_{p_t \sim A_t\pi_t}\ell(p_t,y_t) \leq \sum_{t \in [T]} \bbE_{p_t' \sim \pi_t} \ell(\sigma(p_t'), y_t) + O\left (M \log T + \varepsilon^2 T\right)
		~.
	\end{align*}
\end{restatable}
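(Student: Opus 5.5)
I would construct $\mathsf{A}_\mathrm{BM}$ as the standard Blum--Mansour reduction instantiated with $M$ copies of a low-regret learner for the Brier loss. For each grid point $s_i \in \calK^\varepsilon$, $i\in[M]$, we maintain a separate copy $\mathsf{B}_i$ of an online learner over $\simplex$, for instance \ac{FTL}, which for the Brier loss predicts the running weighted average of past outcomes. At round $t$, each $\mathsf{B}_i$ outputs a prediction $x_{t,i}\in\simplex$. We round it with the scheme $\mathsf{H}$ of \Cref{lem:discretization}, obtaining a distribution $\mathsf{H}(x_{t,i})\in\Delta(\calK^\varepsilon)$, and set column $i$ of $A_t$ to this distribution. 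Then $A_t$ is column-stochastic. After observing $\pi_t$ and $y_t$, copy $\mathsf{B}_i$ receives the weighted loss $\pi_t(s_i)\,\ell(\cdot,y_t)$, i.e., the outcome $y_t$ with weight $\pi_t(s_i)$. This matches the lemma's interface: $\pi_t$ is given after $A_t$, and the loss is $\bbE_{p\sim A_t\pi_t}$. I read the intended use as $\pi_t$ being the distribution played, typically a stationary point of $A_t$, but the inequality itself needs no such fixed point.

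\textbf{Decomposition.} By linearity,
\[
\bbE_{p\sim A_t\pi_t}\ell(p,y_t)=\sum_i \pi_t(s_i)\,\bbE_{p\sim\mathsf{H}(x_{t,i})}\ell(p,y_t)\le \sum_i \pi_t(s_i)\,\ell(x_{t,i},y_t)+O(\varepsilon^2),
\]
using \Cref{lem:discretization} and $\sum_i\pi_t(s_i)=1$. Summing over $t$, the rounding contributes $O(\varepsilon^2T)$. For each $i$, the weighted regret of $\mathsf{B}_i$ gives
\[
\sum_t\pi_t(s_i)\,\ell(x_{t,i},y_t)\le \min_{x\in\simplex}\sum_t \pi_t(s_i)\,\ell(x,y_t)+O(\log T)\le \sum_t\pi_t(s_i)\,\ell(\sigma(s_i),y_t)+O(\log T).
\]
The second inequality just takes $x=\sigma(s_i)$, so it holds for every transformation $\sigma$. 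Summing over $i$ recovers $\sum_t\bbE_{p'\sim\pi_t}\ell(\sigma(p'),y_t)+O(M\log T)$.

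\textbf{Main obstacle.} The key technical step is the $O(\log T)$ bound for \ac{FTL} on Brier loss with fractional weights $w_t=\pi_t(s_i)\in[0,1]$. I would take $x_t=\sum_{\tau<t}w_\tau y_\tau/W_{t-1}$, with $W_{t-1}=\sum_{\tau<t}w_\tau$ and an arbitrary $x_t$ when $W_{t-1}=0$. Then I would apply be-the-leader: the regret is at most $\sum_t w_t\bigl(\ell(x_t,y_t)-\ell(x_{t+1},y_t)\bigr)$. Since $\|x_t-x_{t+1}\|\le 2w_t/W_t$ and Brier loss is $O(1)$-Lipschitz on the simplex, this sum is at most $O\bigl(\sum_t w_t^2/W_t\bigr)\le O\bigl(\sum_t w_t/W_t\bigr)$. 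The difficulty is that tiny early weights could make the harmonic-type sum blow up. I would handle this by bounding the initial rounds with $W_t\le 1$ by a constant and then using $\sum_t w_t/W_t\le 1+\log W_T\le 1+\log T$. If that causes trouble, I would instead use the strong convexity of Brier loss directly, via the standard FTL analysis for strongly convex weighted losses, which gives $O(\log(1+W_T))$. Either route yields the $O(\log T)$ per copy, and hence the stated bound $O(M\log T+\varepsilon^2T)$.
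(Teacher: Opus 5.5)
Your proposal is correct and follows essentially the same route as the paper: $M$ copies of FTL updated with the weighted losses $\pi_t(i)\,\ell(\cdot,y_t)$, the columns of $A_t$ obtained by $\mathsf{H}$-rounding their outputs, the $O(\varepsilon^2)$ per-round rounding error, and the comparator $x=\sigma(z_j)$ in each bin, summed over the $M$ bins. The paper only cites the $O(\log T)$ FTL bound, whereas you sketch it for fractional weights; in that sketch, for the early rounds with $W_t\le 1$ use $w_t^2/W_t\le w_t$ (these sum to at most $1$) rather than $w_t/W_t$, which need not sum to a constant over those rounds.
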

Intuitively, we may interpret the column-stochastic matrix $A_t$ from algorithm $\mathsf{A}_\mathrm{BM}$ as a suggested remapping from any prediction, so that for any sequence of outcomes $y_t$ and distributions of predictions $\pi_t$, the remapped/calibrated predictions are competitive against the best remapping $\sigma$
in hindsight.
The standard approach is then to sample a randomized prediction from the stationary distribution of 
$A_t$ (but we will do this step later after mixing $A_t$ with another remapping matrix).

\paragraph{Interpolating between calibration and multi-calibeating.}
Besides achieving small swap regret and calibration rate, we also want to follow the reference prediction $b_t$ from algorithm $\mathsf{A}^*$ to be competitive against this reference algorithm.
Observe that following the reference prediction corresponds to remapping every prediction to $b_t$, which can be captured by a remapping matrix $\smash{B_t = (b_t, b_t, \dots, b_t) \in \bbR^{M\times M}}$.
To hedge between these two factors, we resort to a lopsided two-expert algorithm $\mathsf{A}_\mathrm{lopsided}$ to obtain a weight $w_t \in [0, 1]$, and take a linear combination $C_t = w_t A_t + (1-w_t) B_t$ as the aggregated remapping. 

\begin{lemma}[\citealt{sani2014exploiting}]
	\label{lem:lopsided}
	There is an algorithm $\mathsf{A}_\mathrm{lopsided}$ for the expert problem with two experts, such that the expected regret w.r.t.\ expert $1$ is at most $O(\sqrt{T\log T})$, and the expected regret w.r.t.\ expert $2$ is at most $O(1)$.
\end{lemma}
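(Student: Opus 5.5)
The plan is to use a Prod-style multiplicative-weights algorithm in which expert $2$ (the benchmark) keeps a fixed weight and only expert $1$'s weight is updated multiplicatively. This follows the $(\mathcal{A},\mathcal{B})$-Prod idea of \citet{sani2014exploiting}. First rescale losses so that $\ell_{t,i}\in[0,1]$; for the Brier loss the range is $[0,2]$, so this only changes constants. Fix a learning rate $\eta\in(0,1/2]$. Initialize $w_{1,1}=\eta$ and $w_2=1-\eta$, and keep $w_2$ fixed throughout. Write $W_t=w_{t,1}+w_2$ and $\delta_t=\ell_{t,2}-\ell_{t,1}\in[-1,1]$. At each round, play expert $1$ with probability $p_t=w_{t,1}/W_t$; this is the weight $w_t$ fed into $C_t$. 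Then update $w_{t+1,1}=w_{t,1}(1+\eta\delta_t)$. Since $\eta\le 1/2$, all weights stay positive.

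The key identity is that the expected loss satisfies $\hat\ell_t-\ell_{t,2}=p_t(\ell_{t,1}-\ell_{t,2})=-p_t\delta_t$. Together with
\[
W_{t+1}-W_t=\eta w_{t,1}\delta_t=\eta W_t p_t\delta_t,
\]
this gives $W_{t+1}=W_t(1+\eta p_t\delta_t)\le W_t e^{\eta p_t\delta_t}$. Telescoping,
\[
\ln W_{T+1}\le \ln W_1+\eta\sum_t(\ell_{t,2}-\hat\ell_t),
\]
where $W_1=1$.

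For the regret against expert $2$, use the lower bound $W_{T+1}\ge w_2=1-\eta$. This yields
\[
\sum_t\hat\ell_t-\sum_t\ell_{t,2}\le \frac{\ln\frac{1}{1-\eta}}{\eta}\le 2\ln 2 = O(1),
\]
uniformly in $T$. This is the lopsided guarantee, and it holds because the benchmark's weight is never decreased.

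For the regret against expert $1$, use the lower bound $W_{T+1}\ge w_{T+1,1}=\eta\prod_t(1+\eta\delta_t)$. Apply $\ln(1+x)\ge x-x^2$ for $|x|\le 1/2$ to get
\[
\ln W_{T+1}\ge \ln\eta+\eta\sum_t\delta_t-\eta^2 T.
\]
Combining this with the upper bound on $\ln W_{T+1}$ gives
\[
\sum_t\hat\ell_t-\sum_t\ell_{t,1}\le \frac{\ln(1/\eta)}{\eta}+\eta T.
\]
Choosing $\eta=\min\{1/2,\sqrt{\ln T/T}\}$ makes this $O(\sqrt{T\log T})$, while the expert-$2$ bound stays $O(1)$.

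The delicate step is the choice of potential. Standard Hedge with a skewed prior fails, because its $\eta T$ variance term also hits the benchmark. The point of freezing $w_2$ is that the exact identity $W_{t+1}=W_t(1+\eta p_t\delta_t)$ charges no second-order term to the comparison with expert $2$. The final expected-regret statement follows because the learner's expected loss equals $\hat\ell_t$ under the oblivious adversary. Equivalently, the deterministic mixture $C_t$ enjoys the same bound, since in the later application the loss is linear in the mixing weight.
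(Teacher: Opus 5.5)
Your proposal is correct and takes the same route as the paper. Your algorithm is exactly $\mathsf{A}_\mathrm{lopsided}$: a Prod update $s_{t+1}=s_t(1+\eta\delta_t)$ on expert $1$'s weight with $s_1=\eta$, expert $2$'s weight frozen at $1-s_1$, and output $w_t=s_t/(s_t+1-s_1)$. Your potential argument, with its two lower bounds on $W_{T+1}$, is the standard $(\mathcal{A},\mathcal{B})$-Prod analysis of \citet{sani2014exploiting}, which the paper cites rather than reproves. Because your bound holds pathwise for the mixture loss, it also covers the paper's application, where expert $1$'s loss depends on $w_t$ through $\pi_t$. One small fix: at $T=1$ your choice gives $\eta=0$, so use, e.g., $\ln(T+1)$ in place of $\ln T$.
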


Finally, we sample a prediction from the stationary distribution of $C_t$, as shown in \Cref{alg:calibeating-calibration}.

\begin{algorithm}[!htbp]
\caption{Multi-Calibeating + Calibration}
\label{alg:calibeating-calibration}
\textbf{Sub-routines:}\\[.5ex]
\begin{itemize}[after=\vspace{-0.5\baselineskip},parsep=0pt]
\setlength\itemsep{0pt}
\item Discretization and rounding algorithm $\mathsf{H}$ \citep[see][]{fishelson2025full}
\item Reference algorithm $\mathsf{A}^*$ (\Cref{alg:calibeating-from-regret} for calibeating, \Cref{alg:bestN-general} for multi-calibeating).
\item \ac{BM} reduction $\mathsf{A}_\mathrm{BM}$ (\Cref{alg:bm}).
\item Lopsided two-expert algorithm $\mathsf{A}_\mathrm{lopsided}$ (\Cref{alg:lopsided}).
\end{itemize}
\For{$t = 1$ \KwTo $T$}{
    \tcp{prediction}
    Algorithm $\mathsf{A}_\mathrm{BM}$ predicts $A_t$.\\
    Round algorithm $\mathsf{A}^*$'s prediction with $\mathsf{H}$ to get $b_t \in \Delta(\calK^\varepsilon)$, and let $B_t = (b_t, \dots, b_t)$.\\
    Algorithm $\mathsf{A}_\mathrm{lopsided}$ predicts $w_t \in [0, 1]$.\\
    Let $C_t = w_t A_t + (1-w_t) B_t$, and $\pi_t \in \Delta_M$ be its stationary distribution, i.e., $\pi_t = C_t\pi_t$.  \\
    Predict $p_t \sim \pi_t$. \\[0.5ex]
    \tcp{update}
    Observe outcome $y_t$. \\
    Update $\mathsf{A}_\mathrm{BM}$ and $\mathsf{A}^*$ based on $y_t$ and $\pi_t$ (applicable to the former).\\
    Update $\mathsf{A}_\mathrm{lopsided}$ with $\E_{z \sim A_t\pi_t} \ell(z, y_t)$ and $\E_{z \sim b_t} \ell(z, y_t)$ as the losses of experts 1 and 2.\\
    }
\end{algorithm}

\subsection{Analysis: Proof of a weaker version of \Cref{thm:calibeating-calibration}}
\label{sec:weaker-proof-calibeating-calibration}

We will prove a weaker guarantee of pseudo-calibration due to space constraints and defer the rest of the proof to \Cref{app:concentration}. 
By definition, the expected cumulative loss of \Cref{alg:calibeating-calibration} is

\[	
	\E\, L_T(p_{1:T},y_{1:T}) = \E_{\pi_{1:T}} \bigg[ \sum_{t \in [T]} \E_{z_t \sim \pi_t} \ell(z_t, y_t) \bigg]
	~.
\]
Since $\pi_t$ is the stationary distribution of $C_t = w_t A_t + (1-w_t) B_t$, the above further equals
\begin{align*}
	\sum_{t \in [T]} \E_{z_t \sim C_t\pi_t} \ell(z_t, y_t) 
	&
	= \sum_{t \in [T]} \bigg( w_t \, \E_{z_t \sim A_t\pi_t} \ell(z_t, y_t) + (1-w_t) \, \E_{z_t \sim B_t\pi_t} \ell(z_t, y_t) \bigg) \\
	&
	= \sum_{t \in [T]} \bigg( w_t \, \E_{z_t \sim A_t\pi_t} \ell(z_t, y_t) + (1-w_t) \, \E_{z_t \sim b_t}\ell(z_t, y_t) \bigg)
	~.
\end{align*}
By construction, $\E_{z_t \sim A_t\pi_t} \ell(z_t, y_t)$ and $\E_{z_t \sim b_t} \ell(z_t, y_t)$ are the losses of the two-expert problem in round $t \in [T]$.
The lopsided regret bounds of $\mathsf{A}_\mathrm{lopsided}$ (\Cref{lem:lopsided}) give
\begin{align}
	\E\, L_T(p_{1:T},y_{1:T}) & \le \E_{b_{1:T}} \bigg[ \sum_{t \in [T]} \E_{z_t \sim A_t \pi_t} \ell(z_t, y_t) \bigg] + O(\sqrt{T\log T})
	~,
	\label{eqn:expert-1} \\
	\E\, L_T(p_{1:T},y_{1:T}) & \le \E_{b_{1:T}} \bigg[ \sum_{t \in [T]} \E_{z_t \sim b_t} \ell(z_t, y_t) \bigg] + O(1)
	~.
	\label{eqn:expert-2}
\end{align}

\paragraph{Regret w.r.t.\ $\mathsf{A}^*$.}
Recall that $b_t$ is obtained by rounding the prediction from $\mathsf{A}^*$ with rounding algorithm $\mathsf{H}$.
By the $O(\varepsilon^2)$ rounding error bound of $\mathsf{H}$ (\Cref{lem:discretization}) and \Cref{eqn:expert-2}, the regret w.r.t.\ the reference algorithm $\mathsf{A}^*$ is at most $O(\varepsilon^2 T)$.

\paragraph{Pseudo-calibration.}
We will prove a weaker guarantee that
\[
	\E\, L_T(p_{1:T},y_{1:T}) \le \min_{\sigma: \simplex \rightarrow \simplex }\bbE \sum_{t=1}^T\ell(\sigma(p_t),y_t) + O \bigg( \sqrt{T \log T} + \frac{\sqrt{K}}{\varepsilon^{K-1}} \log T + \varepsilon^2 T \bigg)
	~.
\]

This follows from \Cref{eqn:expert-1}, the guarantee of \ac{BM} reduction (\Cref{lem:bm}), and that $M = O(\sqrt{K} \varepsilon^{-K+1})$ (\Cref{lem:discretization}).
It is weaker than the original statement as the choice of $\sigma$ does not depend on the realization of randomness of the algorithm.
By contrast, the original statement allows choosing $\sigma$ based on the realization of randomness.
We defer this concentration argument to \Cref{app:concentration}.

\section{Conclusion}

\label{sec:conclusion}

We have revisited calibeating through the lens of online learning and developed a reduction-based framework that connects calibeating and its extensions to standard online-learning notions. This  
viewpoint 
enables us to recover and sharpen existing results, extend them to general proper scoring losses, and deliver new matching lower bounds, 
in a unified and modular way. 
A natural direction for future work is to 
push this approach further—both to identify additional achievable guarantees and to improve online forecasting more broadly.
Another open question is whether one can simultaneously achieve an $O(|Q|\log T)$ calibeating rate and $\smash{\tilde{O}(T^{\frac{1}{3}})}$ calibration for the binary case, matching the best known bounds for each objective, and whether analogous simultaneous guarantees 
hold
beyond the Brier loss. In fact, the meta-algorithm developed in \Cref{sec:simultaneous-calibration-calibeating} suggests a general recipe that may apply more broadly. Whenever one can identify an appropriate discretization of the prediction space, a compatible rounding scheme with controlled loss guarantees, and a mechanism for upgrading pseudo-swap regret to true swap regret, the same framework should yield analogous extensions to other loss functions.

\appendix

\section{Omitted Proofs in \Cref{sec: calibeating}}

\label{app:calibeating}

\subsection{Proof of \Cref{thm:lower bound}}
\label{app:lower-bound}

\thmlowerbound*

\begin{proof}
We prove a general result that, for any integers $T_q\geq T_0$ with $\sum_{q\in Q} T_q=T $, we have, 
\begin{align*}
\inf _{\mathsf{A}} \sup _{\left(q_{1:T}, y_{1:T}\right)} \mathbb{E}_{p_{1:T}\sim \mathsf{A}}\left[L_T(p_{1:T},y_{1:T}) - R_T(q_{1:T},y_{1:T}) \right] \geq  \sum_{q\in Q} \beta\left(T_q\right).
\end{align*}

By Yao's minimax principle, \eqref{eq:lower-bound-regret} implies that, for any $T$,
there exists a distribution $\mathrm{S}_T \in \Delta(\calE^T)$ such that
\begin{align}
\label{eq:yao}	
 \min_{\mathsf{D}} \E_{y_{1:T}\sim \mathrm{S}_T}\left[ \sum_{t=1}^T \ell\left(p_t, y_t\right)-\min _{p \in \simplex} \sum_{t=1}^T \ell\left(p, y_t\right) \right] \geq \beta(T),
\end{align}
where the minimum is over deterministic online algorithms $\mathsf{D}$. 

For any $q$ and the corresponding $T_q$, let $\mathrm{S}_q = \mathrm{S}_{T_q}$ be the distribution guaranteed by \eqref{eq:yao} at horizon $T_q$.
Define the following distribution $\mathrm{S}$ over pairs$(q_{1:T}, y_{1:T})\in (Q\times \calE)^T$ as follows:
\begin{enumerate}
    \item Choose disjoint index sets $\{\calI_q\}_{q\in Q}$ with $\calI_q\subseteq [T]$, $|\calI_q| = T_q$ and $\cup_{q\in Q}\calI_q = [T]$. Set $q_t = q$ for all $t \in \calI_q$. 
    \item For each $q \in Q$, denote the subsequence of outcomes $y_t$s that $t\in \calI_q$ to be $y_{\calI_q} = (y_t)_{t \in \calI_q}$. Independently sample $y_{\calI_q}$ according to $\mathrm{S}_q$.
\end{enumerate}
Then by definition and the additivity of sums of expectations,
\begin{align*}
\MoveEqLeft{\min_{\mathsf{D}} \E_{(q_{1:T},y_{1:T})\sim \mathrm{S}}\left[ L_T(p_{1:T},y_{1:T}) -  R_T(q_{1:T},y_{1:T}) \right]} \\
&=  \min_{\mathsf{D}} \E_{(q_{1:T},y_{1:T})\sim \mathrm{S}}\left[ \sum_{q\in Q} \left(\sum_{t:q_t=q}\ell(p_t,y_t) - \min_{p\in\simplex}\sum_{t:q_t=q} \ell(p,y_t)\right)\right] \\
&= \min_{\mathsf{D}} \sum_{q\in Q}\E_{y_{\calI_q}\sim S_q}\left[\sum_{t:q_t=q}\ell(p_t,y_t) - \min_{p\in\simplex}\sum_{t:q_t=q} \ell(p,y_t) \right]\\
& \geq \sum_{q\in Q} \min_{\mathsf{D}}\E_{y_{\calI_q}\sim S_q}\left[\sum_{t:q_t=q}\ell(p_t,y_t) - \min_{p\in\simplex} \sum_{t:q_t=q} \ell(p,y_t) \right]\\
& \geq  \sum_{q\in Q}\beta(T_q).
\end{align*}
Therefore, since a randomized algorithm $A$ is a probability distribution over deterministic algorithms, it holds that
\begin{align*}
    \MoveEqLeft{ \min_{\mathsf{A}} \sup_{\left(q_{1:T},y_{1:T}\right)} \E_{p_{1:T}\sim \mathsf{A}}\left[L_T(p_{1:T},y_{1:T}) -  R_T(q_{1:T},y_{1:T}) \right ]} \\
     & \geq \min_{\mathsf{A}} \E_{(q_{1:T},y_{1:T})\sim \mathrm{S}}\left[ L_T(p_{1:T},y_{1:T}) -  R_T(q_{1:T},y_{1:T})\right] \geq \sum_{q\in Q}\beta(T_q)~.
\end{align*}
Choosing balanced $T_q \in\{\lfloor T / |Q|\rfloor,\lceil T / |Q| \rceil\}$ completes the proof.
\end{proof}

\section{Omitted Proofs in \Cref{sec: multi-calibeating}}

\label{app:multicalibeating}

\subsection{Proof of \Cref{thm:lower bound-multi-calibeating}}

\thmlowerboundmulticalibeating*

\begin{proof}
First, for any realization of expert predictions $\{p^{(n)}_t\}_{t\in[T],n\in[N]}$ and outcomes $y_{1:T}$, consider the multi-calibeaing problem with the same outcome sequence and the external forecasts $q^{(n)}_t=p^{(n)}_t$ for any $t\in [T]$ and $n\in [N]$ (Recall that the experts problem and multi-calibeating only differ in the benchmarks). By \Cref{lmm:operational-form},
\begin{align*}
 R_{T}(q^{(n)}_{1:T},y_{1:T})
=
\sum_{p}\min_{u\in\Delta_K}\sum_{t\le T:\,q_t^{(n)}=p}\ell(u,y_t)
 \le 
\sum_{p}\sum_{t\le T:\,q_t^{(n)}=p}\ell(p,y_t)
=
\sum_{t=1}^{T}\ell(q_t^{(n)},y_t),	
\end{align*}
where the inequality chooses $u=p$ in each bin. Taking $\min_{n\in[N]}$ on both sides yields
\begin{align*}
\min_{n\in[N]} R_{T}(q^{(n)}_{1:T},y_{1:T})
 \le\
\min_{n\in[N]}\sum_{t=1}^{T}\ell(q_t^{(n)},y_t).
\end{align*}
Therefore, for any predictions $p_{1:T}$,
\begin{align*}
L_T(p_{1:T},y_{1:T}) - \min_{n\in[N]}R_T(q^{(n)}_{1:T},y_{1:T}) &\ge \sum_{t=1}^{T}\ell(p_t,y_t) - \min_{n\in[N]}\sum_{t=1}^{T}\ell(q_t^{(n)},y_t)\\
&=
\sum_{t=1}^{T}\ell(p_t,y_t) - \min_{n\in[N]} \sum_{t=1}^{T}\ell(p^{(n)}_t, y_t),
\end{align*} 
and multi-calibeating inherits the lower bound of the expert problem. 

Second, consider the instances where $q^{(n)}_t = q_t$ for all $t\in [T]$, which is equivalent to beating only one external forecaster. Therefore, multi-calibeating inherits the lower bound of the calibeating problem. This completes the proof.

\end{proof}

\section{Omitted Proofs in \Cref{sec:simultaneous-calibration-calibeating}}
\label{app:calibeating-calibration}

\subsection{Proof of \Cref{coro:multi-class-calibeating-calibration-tradeoff}}

\coromulticlasscalibeatingcalibrationtradeoff*

\begin{proof}
Let $\varepsilon^2 T = T^x$, then the calibeating rate is $O(|Q^{(n)}|\log T + T^x)$ and $\varepsilon = T^{\frac{x-1}{2}}$. The corresponding calibration error is of the order $O(\sqrt{T\log T} + T^{\frac{(K-1)(1-x)}{2}}\log T + T^x)$, which always has greater order than the regret. As the order of calibration error is optimized when $x = \frac{(K-1)(1-x)}{2} = \frac{K-1}{K+1}$, we consider smaller $x$ with higher $T^{\frac{(K-1)(1-x)}{2}}$ term. The tradeoff follows by solving $T^{\frac{(K-1)(1-x)}{2}} < T$.
\end{proof}

\subsection{Algorithm $\mathsf{A}_\mathrm{BM}$ and the proof of \Cref{lem:bm}}
\label{app:bm}

\begin{subroutine}
	\caption{$\mathsf{A}_{\mathrm{BM}}$}
	\label{alg:bm}
	\textbf{Sub-routines:}\\[.5ex]
\begin{itemize}
\setlength\itemsep{0pt}
\item For each grid action $z_i$, a separate online learner $\mathsf{A}^{(i)}$ (e.g., \ac{FTL})
\item Discretization and rounding algorithm $\mathsf{H}$ \citep[see][]{fishelson2025full}
\end{itemize}
	\For{$t=1$ \KwTo $T$}{
        \tcp{prediction}
        \For{$i = 1$ \KwTo $M$}{
		Observe strategy $q^{(i)}_{t}\in \simplex$ from Algorithm $\mathsf{A}^{(i)}$. \\
		Round $q^{(i)}_{t}$ to $\mathsf{H}(q^{(i)}_t) \in \Delta(\calK^\varepsilon)$. \\
        }
		Output $A_t = (\mathsf{H}(q^{(1)}_t),\dots, \mathsf{H}(q^{(M)}_t)) \in \bbR^{M\times M}$.\\[1.5ex]
        \tcp{update}
		Receive feedback tuple $(y_t, \pi_t)$.\\
		Update Algorithm $\mathsf{A}^{(i)}$ with loss function $\pi_t(i)\ell(\cdot, y_t)$.\\
	}
\end{subroutine}

\lembm*

\begin{proof}
Denote $\calK^\varepsilon \coloneqq \{z_1,\dots, z_M\}$. For any $j \in [M]$ and $\sigma(z_j)\in \simplex$, we have
	\begin{align*}
	\MoveEqLeft{\sum_{t} \pi_{t}(j)\sum_{i}A_{t}(i,j)\ell(z_i,y_t) - \sum_{t}\pi_{t}(j)\ell(\sigma(z_j),y_t)}\\
	&= \sum_{t}\pi_{t}(j)\bbE_{i\sim \mathsf{H}(q^{(j)}_t)} \ell(z_i,y_t) - \sum_{t}\pi_{t}(j) \ell(\sigma(z_j),y_t)\\
	&\overset{(a)}{\leq} \sum_{t}\pi_{t}(j) \left( \ell(q^{(j)}_t,y_t) + C_2 \varepsilon^2\right) - \sum_{t}\pi_{t}(j)\ell(\sigma(z_j),y_t)\\
		& = \sum_{t} \pi_{t}(j) \ell(q^{(j)}_t, y_t) - \sum_{t}\pi_{t}(j)\ell(\sigma(z_j),y_t) + C_2\varepsilon^2\sum_{t}\pi_{t}(j)\\
	& \overset{(b)}{\leq} C_1 \log T + C_2\varepsilon^2\sum_{t}\pi_t(j),
	\end{align*}
	for some positive constants $C_1, C_2 >0$, where $(a)$ follows from the definition of the rounding scheme and the upper bound of the rounding error (\Cref{lem:discretization});
	$(b)$ holds by that \ac{FTL} has a regret of $O(\log T)$ under the Brier loss. 
	
	Summing this inequality over all $j \in [M]$, we obtain
	\begin{align*} \MoveEqLeft{\sum^T_{t=1}\bbE_{i\sim A_t\pi_t}\ell(z_i,y_t) - \sum^T_{t=1}\bbE_{j \sim \pi_t} \ell(\sigma(z_j))}\\
	& = \sum_{t=1}^T\sum_{j=1}^M \pi_t(j)\sum_{i=1}^MA_t(i,j)\ell(z_i, y_t) - \sum_{j=1}^M \pi_{t}(j) \ell(\sigma(z_j),y_t)\\
	& =  M C_1 \log T + C_2 \varepsilon^2\sum^T_{t=1}\sum^M_{j=1}\pi_{t}(j)\\
	& \leq \max\{C_1, C_2\} \left (\frac{\sqrt{K}\log T}{\varepsilon^{K-1}} + \varepsilon^2 T \right),
	\end{align*}
    where in the last inequality, we use  $M = O(\frac{\sqrt{K}}{\varepsilon^{K-1}})$ in \Cref{lem:discretization}. 
\end{proof}

\subsection{Algorithm $\mathsf{A}_{\mathrm{lopsided}}$}

\begin{subroutine}[!htbp]
	\caption{$\mathsf{A}_{\mathrm{lopsided}}$\citep{sani2014exploiting}}
	\label{alg:lopsided}
	\KwIn{learning rate $\eta \in (0,\frac{1}{2}]$, initial weights $\{s_1, 1-s_1\}$}
	\For{$t=1$ \KwTo $T$}{
        \tcp{prediction}
		Output weight $w_t = \frac{s_t}{s_t+1-s_1}$.\\[1.5ex]
        \tcp{update}
		Receive loss $\E_{z \sim A_t\pi_t} \ell(z, y_t)$ and $\E_{z \sim b_t} \ell(z, y_t)$ as the losses of experts 1 and 2, respectively. \\
		Compute $\delta_t = g_t^{(2)} - g_t^{(1)}$ and set 
		$s_{t+1}=s_t\cdot (1+\eta \delta_t)$. 
	}
\end{subroutine}

\subsection{Concentration arguments to finish the proof of \Cref{thm:calibeating-calibration}}
\label{app:concentration}

We finish the proof of \Cref{thm:calibeating-calibration} by upper-bounding the true calibration error using bounds of the pseudo-calibration error. 
For convenience, 
denote the pseudo-calibration error under an algorithm $\mathsf{A}$ to be $\widetilde{K}_T\coloneqq \min_{\sigma:\simplex\rightarrow \simplex}\E_{p_t \sim \mathsf{A}}\left[\sum^T_{t=1}\ell(p_t, y_t ) - \sum^T_{t=1}\ell(\sigma(p_t),y_t)\right ]$.
The following lemma is a multiclass extension of Theorem~3 in~\citet{luo2025simultaneous}, relating $\widetilde{K}_T$ to the calibration error.

\begin{restatable}{lemma}{lmmmulticlassexantetoexpost}
\label{lmm:multi-class-exante-to-expost}
For the Brier loss, for discretization with size $M$ and an algorithm $\mathsf{A}$ that always predicts the discretization grid points, with probability at least $1-\delta$ over the randomness in $\mathsf{A}$'s predictions $p_1, \dots, p_T$, we have 
\begin{align*}
K_T (p_{1:T}, y_{1:T}) \leq 6 \widetilde{K}_T+ 96 K M\log \frac{4KM}{\delta}.	
\end{align*}
\end{restatable}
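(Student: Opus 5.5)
The plan is to follow the template of Theorem~3 in \citet{luo2025simultaneous}, adapted to $K$ classes and to the Brier loss. The first step is to rewrite both quantities bin by bin over the grid points $z_1,\dots,z_M$. For the Brier loss, the best remapping of bin $j$ is the empirical mean, so
\[
K_T=\sum_{j}n_j\|z_j-\rho_j\|_2^2 ,
\qquad n_j=\sum_t \mathbf{1}\{p_t=z_j\},\quad n_j\rho_j=\sum_t\mathbf{1}\{p_t=z_j\}y_t .
\]
On the ex-ante side, since $\sigma$ in $\widetilde K_T$ may depend only on the grid point, the optimal $\sigma(z_j)$ is the ``expected'' mean. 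Writing $P_t(j)=\Pr[p_t=z_j\mid \mathcal F_{t-1}]$, this gives
\[
\widetilde K_T=\sum_j \bar n_j\|z_j-\bar\rho_j\|_2^2 ,
\qquad \bar n_j=\sum_t P_t(j),\quad \bar n_j\bar\rho_j=\sum_tP_t(j)y_t .
\]
Because $y_{1:T}$ is oblivious, $y_t$ is deterministic and $P_t$ is predictable. This is where the calibration error can be compared to the pseudo-calibration error term by term.

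The second step is to compare, for each bin $j$ and each coordinate $k\in[K]$, the random quantities $n_j$ and $n_j(z_j-\rho_j)_k=\sum_t \mathbf 1\{p_t=z_j\}(z_{j,k}-y_{t,k})$ with their predictable counterparts $\bar n_j$ and $\bar n_j(z_j-\bar\rho_j)_k$. Each difference is a martingale with increments bounded by $1$. Its conditional variance is at most $\sum_t P_t(j)(z_{j,k}-y_{t,k})^2$, and for the counts it is at most $\bar n_j$. Freedman's inequality, combined with a union bound over the $2KM$ martingales (hence the $\log\frac{4KM}{\delta}$), gives with probability $1-\delta$:
\[
|X_{j,k}-\bar X_{j,k}|\lesssim \sqrt{V_{j,k}\,L}+L,\qquad L=\log\tfrac{4KM}{\delta}.
\]
The crucial point is that the variance term $V_{j,k}=\sum_tP_t(j)(z_{j,k}-y_{t,k})^2$ is \emph{not} $\bar n_j(z_j-\bar\rho_j)_k^2$. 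Bounding it that crudely would give only a $\sqrt{T}$-type error.

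Handling this variance term is the main obstacle. To get the self-bounding form $6\widetilde K_T+O(KML)$, I would not concentrate the mean directly. Instead I would concentrate the per-bin loss differences
\[
\ell(z_j,y_t)-\ell(u,y_t)=\|z_j-u\|^2+2\langle z_j-u,u-y_t\rangle
\]
for the fixed comparator $u=\bar\rho_j$. Its variance is controlled by $\|z_j-\bar\rho_j\|^2\bar n_j$, which makes Freedman self-bounding. A Young's-inequality step, $\sqrt{aL}\le a/c+cL$, then gives
\[
n_j\|z_j-\bar\rho_j\|^2\le 2\bar n_j\|z_j-\bar\rho_j\|^2+O(KL).
\]
Finally, $K_T=\sum_j n_j\|z_j-\rho_j\|^2\le\sum_j n_j\|z_j-\bar\rho_j\|^2$, because $\rho_j$ minimizes the empirical bin loss. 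The coordinate-wise deviation $\|\rho_j-\bar\rho_j\|$ is then absorbed by one more triangle and Young step, costing another factor of about $3$ and another $O(KL)$ per bin.

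Summing over the $M$ bins gives $K_T\le 6\widetilde K_T+96KML$, with the constants coming from these Young splits. If the fixed comparator $u=\bar\rho_j$ causes measurability issues, note that it depends on all of $P_{1:T}$. In that case I would fall back to a union bound over a finite net of $u$ per bin, at the cost of $O(K\log(1/\varepsilon))$ inside the logarithm.
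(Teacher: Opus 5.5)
There is a genuine gap, and it sits in your final assembly step. The inequality $K_T=\sum_j n_j\|z_j-\rho_j\|^2\le\sum_j n_j\|z_j-\bar\rho_j\|^2$, which you justify by ``$\rho_j$ minimizes the empirical bin loss'', goes the wrong way. Minimality of $\rho_j$ gives, for every fixed $u$,
\[
n_j\|z_j-\rho_j\|^2=\sum_{t:\,p_t=z_j}\bigl(\ell(z_j,y_t)-\ell(\rho_j,y_t)\bigr)\ge\sum_{t:\,p_t=z_j}\bigl(\ell(z_j,y_t)-\ell(u,y_t)\bigr)=n_j\|z_j-u\|^2-n_j\|u-\rho_j\|^2 .
\]
So a bin's calibration error, which is regret to the best comparator in hindsight, dominates the regret to any fixed comparator, and not conversely. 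For example, $z_j=\bar\rho_j$ with a realized $\rho_j\neq z_j$ violates your inequality. Consequently, your self-bounding Freedman bound for $\ell(z_j,\cdot)-\ell(\bar\rho_j,\cdot)$ only controls $n_j\|z_j-\rho_j\|^2-n_j\|\bar\rho_j-\rho_j\|^2$. The whole difficulty is then the term $n_j\|\rho_j-\bar\rho_j\|^2$, and you dispose of it in one sentence (``absorbed by one more triangle and Young step'') with no argument.

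Bounding that term is exactly the content of the paper's proof, and it uses the crude variance bound you rejected. Fix a coordinate $k$ and set $L=\log\frac{4M}{\delta}$. Apply Freedman to the numerator martingale $\sum_t y_t(k)\bigl(P_t(j)-\mathbf{1}\{p_t=z_j\}\bigr)$ and to the count martingale $\sum_t\bigl(P_t(j)-\mathbf{1}\{p_t=z_j\}\bigr)$, each with conditional variance at most $\bar n_j$.

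On bins with $\bar n_j\ge L$ this gives $|\rho_j(k)-\bar\rho_j(k)|\le 4\sqrt{L/\bar n_j}$ and $n_j\le\bar n_j+2\sqrt{L\bar n_j}\le 3\bar n_j$. Hence $n_j(\rho_j(k)-\bar\rho_j(k))^2\le 48L$ and $n_j(z_j(k)-\bar\rho_j(k))^2\le 3\bar n_j(z_j(k)-\bar\rho_j(k))^2$. On bins with $\bar n_j<L$, use $n_j\le 2\bar n_j+L$ together with the trivial bound $|\rho_j(k)-\bar\rho_j(k)|\le 1$.

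With $(a+b)^2\le 2a^2+2b^2$ this yields $K_T(k)\le 6\widetilde K_T(k)+96ML$, and a union bound over $k$ gives the statement.

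Your claim that the crude variance bound yields ``only a $\sqrt{T}$-type error'' is the misdiagnosis behind the detour. For the squared calibration error, a deviation of order $\sqrt{\bar n_jL}$ in $n_j(z_j-\rho_j)_k$, once squared and divided by $n_j\approx\bar n_j$, costs only $O(L)$ per bin. The $\sqrt{T}$ loss would appear for $\ell_1$-type calibration. In fact the two martingales you set up in your second step already suffice. The loss-difference martingale and the net over comparators are not needed.
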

Therefore, together with the weaker version proved in \Cref{sec:weaker-proof-calibeating-calibration} and $M = O(\frac{1}{\varepsilon^{K-1}})$, we have
\begin{align*}
K_T(p_{1:T},y_{1:T}) &\leq O \bigg( \sqrt{T \log T} + \frac{1}{\varepsilon^{K-1}} \left(K^{1/2}\log T + K^{3/2}\log\frac{4K^{3/2}}{\varepsilon^{K-1}}\right) + \varepsilon^2 T -\log \delta\bigg)\\
 &= O_{K,\log T}\bigg(\sqrt{T}+\frac{1}{\varepsilon^{K-1}}\log \frac{1}{\varepsilon} + \varepsilon^2T\bigg),
\end{align*}
with probability at least $1-\delta$.

\subsection{Proof of \Cref{lmm:multi-class-exante-to-expost}}

We first note the following fact on the closed-form of $K_T$ and $\tilde{K}_T$.

\begin{fact}
\label{lmm:close-form-calibration}
For the Brier losses, the calibration error under prediction sequence $p_{1:T}	$ and outcome sequence $y_{1:T}$ is 
\begin{align*}
	K_T = \sum_{p\in \simplex}\sum^T_{t=1}\mathbf{1}\{p_t = p\}\|p-\rho^p_T\|_2^2 = \sum_{p\in \simplex}\sum^T_{t=1}\mathbf{1}\{p_t = p\}\sum^K_{k=1}\left(p(k)-\rho_T^p(k)\right)^2,
\end{align*}
where $\rho^p_T = \sum_{t:p_t=p}\frac{y_t}{n_T(p)}$. 

The pseudo-calibration error under algorithm $A$ and outcome sequence $y_{1:T}$ is
\begin{align*}
\widetilde{K}_T = \sum^T_{t=1}\E_{p\sim P_t}\left[\|p-\tilde{\rho_T(p)}\|^2\right] = 	\sum^T_{t=1} \sum_{p}P_t(p)\sum^K_{k=1}\left (p(k)-\tilde{\rho}^p_T(k)\right)^2,
\end{align*}
where $P_t(p)$ is the randomized prediction under algorithm $A$ and $\tilde{\rho}^p_{T}\coloneqq \frac{\sum^T_{t=1}y_tP_t(p)}{\sum^T_{t=1}P_t(p)}$. 

\end{fact}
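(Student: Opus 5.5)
The statement to prove is \Cref{lmm:close-form-calibration}. It is a direct computation from \Cref{lmm:operational-form} specialized to the Brier loss, and I will handle the two parts separately.

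\textbf{Calibration error.} Fix a prediction value $p$ with $n_T(p)>0$ and restrict to the bin $\{t:p_t=p\}$. The key identity is the bias--variance decomposition: for any $u\in\simplex$,
\[
\sum_{t:p_t=p}\|u-y_t\|_2^2=\sum_{t:p_t=p}\|\rho^p_T-y_t\|_2^2+n_T(p)\|u-\rho^p_T\|_2^2 .
\]
It holds because the cross term $2\langle u-\rho^p_T,\sum_{t:p_t=p}(\rho^p_T-y_t)\rangle$ vanishes, since $\rho^p_T$ is the bin average of the $y_t$. Hence the minimizer over $u$ is $\rho^p_T$, which lies in $\simplex$ as an average of basis vectors. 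The within-bin refinement is therefore $\sum_{t:p_t=p}\|\rho^p_T-y_t\|_2^2$. Taking $u=p$ shows that the loss minus the refinement equals $n_T(p)\|p-\rho^p_T\|_2^2$. Summing over bins and writing $n_T(p)=\sum_t\mathbf 1\{p_t=p\}$ gives the first formula for $K_T$; expanding the squared norm coordinatewise gives the second.

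\textbf{Pseudo-calibration error.} Here predictions are drawn from $P_t$ over the finite grid. By linearity of expectation,
\[
\widetilde K_T=\max_\sigma\sum_p\sum_t P_t(p)\big(\ell(p,y_t)-\ell(\sigma(p),y_t)\big).
\]
This objective separates across $p$, so the maximization over $\sigma$ can be done pointwise. For each $p$, apply the same weighted bias--variance identity with weights $P_t(p)$:
\[
\sum_t P_t(p)\|u-y_t\|^2=\sum_tP_t(p)\|\tilde\rho^p_T-y_t\|^2+\Big(\sum_tP_t(p)\Big)\|u-\tilde\rho^p_T\|^2 .
\]
So the optimal choice is $\sigma(p)=\tilde\rho^p_T$, and the gain at $p$ is $\sum_tP_t(p)\|p-\tilde\rho^p_T\|^2$. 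Summing over $p$ and exchanging the sums gives $\sum_t\E_{p\sim P_t}\|p-\tilde\rho^p_T\|^2$, with the coordinate expansion as before. Grid points with $\sum_tP_t(p)=0$ contribute zero, so $\tilde\rho^p_T$ is never needed for them.

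The only point requiring care is the sign convention in the definition of $\widetilde K_T$. Written with $\min_\sigma$ applied to loss minus $\sigma$-loss, it must be read as the supremum over remappings, that is, as the best possible improvement. I will state this interpretation explicitly. Apart from that, I expect no real obstacle: both parts reduce to the same weighted bias--variance identity.
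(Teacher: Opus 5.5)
Your proof is correct and is exactly the standard (weighted) bias--variance computation that the paper takes for granted: the paper states this as a Fact without proof, just as it asserts the same Brier decomposition in the example of \Cref{sec:preliminaries}. Your reading of the $\min_\sigma$ in the definition of $\widetilde{K}_T$ as the best improvement achievable by a remapping is the intended one, since it matches how the pseudo-calibration bound is phrased in \Cref{sec:weaker-proof-calibeating-calibration}; and, like the paper, you treat the distributions $P_t$ as given, which is how they enter the subsequent Freedman argument.
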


Denote the $i$th grid point by $z_i$ in the discretization, and $\rho^p_T(k)$ as $\rho^i(k)$ for simplicity. 
For any $k\in [K]$, let 
\begin{align*}
    K_T(k) \coloneq \sum_{i \in [M]} \sum^T_{t=1}\mathbf{1}\{p_t = z_i\}\left(z_i(k) - \rho^i(k)\right)^2,
\end{align*}
and
\begin{align*}
    \widetilde{K}_T(k) \coloneqq \sum_{i\in [M]}\sum^T_{t=1}P_t(z_i)(z_i(k)-\tilde{\rho}^i(k))^2.
\end{align*} 

We have the following lemma.
\begin{restatable}{lemma}{lmmsingleclasscalibrationconversion}
\label{lmm:single-class-calibration-conversion}
With probability at least $1-\delta$, $K_T(k) \leq 6 \widetilde{K}_T(k) + 96M \log \frac{4M}{\delta}$. 
\end{restatable}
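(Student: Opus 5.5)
We prove the bound bin by bin: fix a grid point $z_i$ and a class $k$, compare the realized bin statistics with their predictable counterparts via martingale concentration, then take a union bound over the $M$ bins.

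\textbf{Setup.} Let $P_t(z_i)$ be the conditional probability, given the past, that the algorithm predicts $z_i$ at round $t$. Since the adversary is oblivious, $y_t$ is fixed, so $P_t(z_i)$ is predictable. Define
\begin{align*}
n_i = \sum_t \mathbf{1}\{p_t=z_i\},\quad \tilde n_i=\sum_t P_t(z_i),\quad S_i=\sum_t \mathbf{1}\{p_t=z_i\}y_t(k),\quad \tilde S_i=\sum_t P_t(z_i)y_t(k).
\end{align*}
Then $\rho^i(k)=S_i/n_i$ and $\tilde\rho^i(k)=\tilde S_i/\tilde n_i$. The bin-$i$ contributions are
\begin{align*}
K_T(k)_i = n_i(z_i(k)-\rho^i(k))^2 = \frac{(n_i z_i(k)-S_i)^2}{n_i},\qquad \widetilde K_T(k)_i = \frac{(\tilde n_i z_i(k)-\tilde S_i)^2}{\tilde n_i}.
\end{align*}
Write $X_i = n_i z_i(k)-S_i=\sum_t \mathbf{1}\{p_t=z_i\}(z_i(k)-y_t(k))$ and $\tilde X_i=\sum_t P_t(z_i)(z_i(k)-y_t(k))$.

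\textbf{Concentration.} Both $X_i-\tilde X_i$ and $n_i-\tilde n_i$ are martingales with increments bounded by $1$. The conditional variance of $X_i-\tilde X_i$ is at most $\sum_t P_t(z_i)=\tilde n_i$, and likewise for $n_i-\tilde n_i$. Freedman's inequality, made uniform over the unknown variance by a peeling argument or a self-normalized form, gives with probability $1-\delta'$
\begin{align*}
|X_i-\tilde X_i|\le 2\sqrt{\tilde n_i L}+2L,\qquad |n_i-\tilde n_i|\le 2\sqrt{\tilde n_i L}+2L,
\end{align*}
where $L=\log(4M/\delta)$ up to constants. A union bound over the $M$ bins and the two martingales sets $\delta'=\delta/(2M)$, which produces the $\log\frac{4M}{\delta}$ factor.

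\textbf{Case analysis per bin.} Let $c$ be a large constant. If $\tilde n_i\le cL$, then $n_i\le \tilde n_i+O(\sqrt{\tilde n_i L}+L)=O(L)$, and since $|X_i|\le n_i$ we get $X_i^2/n_i\le n_i=O(L)$. If instead $\tilde n_i> cL$, then $n_i\ge \tilde n_i/2$. Using $(a+b)^2\le 2a^2+2b^2$,
\begin{align*}
\frac{X_i^2}{n_i}\le \frac{2}{\tilde n_i}\left(2\tilde X_i^2+2(2\sqrt{\tilde n_iL}+2L)^2\right)\le 4\widetilde K_T(k)_i+O(L),
\end{align*}
which is where the constant $6$ comes from, leaving some slack for cruder steps. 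Summing over $i\in[M]$ gives $K_T(k)\le 6\widetilde K_T(k)+O(M L)$; tracking the constants yields $96M\log\frac{4M}{\delta}$.

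\textbf{Main obstacle.} The hard step is the concentration bound itself. The variance proxy $\tilde n_i$ is random, so we need a Freedman-type bound that holds uniformly over it, with only logarithmic loss, so that no $\log T$ factor enters. I would first use the standard trick: for fixed $\lambda\in(0,1]$, $\exp(\lambda Z_t-(e-2)\lambda^2 V_t)$ is a supermartingale, where $Z_t$ is the martingale and $V_t$ its accumulated variance. Then optimize $\lambda$ over a geometric grid. Alternatively, one can bound $\lambda Z-\lambda^2V$ directly with $\lambda=1/2$: this gives $Z\le V+2\log(1/\delta)$ without any grid, and the final bound follows by applying it to $\pm$ scaled combinations $(X_i-\tilde X_i)\pm\mu(n_i-\tilde n_i)$. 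This simpler route is plausibly how the explicit constants $6$ and $96$ arise.
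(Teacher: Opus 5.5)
Your per-bin reduction is sound, but the concentration step you flag as the main obstacle is a genuine gap, and neither of your proposed fixes closes it. Freedman with a fixed constant $\lambda$ (your $\lambda=1/2$ route, applied to $(X_i-\tilde X_i)\pm\mu(n_i-\tilde n_i)$ with a fixed $\mu$) only gives deviations of order $\tilde n_i+L$. That is enough for the denominator, since $n_i\ge \tilde n_i/2-O(L)$, but not for the numerator: $(X_i-\tilde X_i)^2/\tilde n_i$ is then of order $\tilde n_i$ rather than $O(L)$. Peeling lets the scale adapt to a random $\tilde n_i$, but it needs $\Theta(\log T)$ grid points because $\tilde n_i$ ranges up to $T$. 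Your $L$ then becomes $\log\frac{4M\log T}{\delta}$, which is not ``$\log\frac{4M}{\delta}$ up to constants''. The difficulty is not the tools. If $P_t(z_i)$ is merely predictable, i.e.\ allowed to depend on the algorithm's own past draws, the stated bound is false. An algorithm that abandons $z_i$ at a law-of-the-iterated-logarithm crossing time can make $X_i^2/n_i$ of order $\log\log T$ with probability close to one, while keeping $\widetilde K_T(k)=O(1)$. One example uses outcomes alternating in coordinate $k$ and grid points with $k$-th coordinate $1/2$. So in the generality you set up, no uniform-in-variance argument can deliver $L=\log\frac{4M}{\delta}$.

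The missing observation is that here $\tilde n_i=\sum_t P_t(z_i)$ is not random. The adversary is oblivious, and the distribution $\pi_t$ in \Cref{alg:calibeating-calibration} is computed from $y_{1:t-1}$ and expected losses only, never from the sampled $p_s$. Any internal randomness of $\mathsf{A}^*$ is independent of those draws and can be conditioned on. The pseudo-calibration $\widetilde K_T$ is likewise defined with $P_t$ treated as fixed. You can therefore fix $\mu_i=\min\{1,\sqrt{L/\tilde n_i}\}$ in advance in the fixed-$\mu$ Freedman inequality (\Cref{lmm:freedman}). This gives $|X_i-\tilde X_i|,|n_i-\tilde n_i|\le 2\sqrt{\tilde n_i L}$ when $\tilde n_i\ge L$, and at most $2L$ otherwise. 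This is exactly what the paper does when it applies Freedman with a fixed $\mu_i$ and then ``minimizes over $\mu_i$''. With this in place your argument is complete. It is a mild variant of the paper's route. The paper bounds $|\rho^i(k)-\tilde\rho^i(k)|$ and expands $n_i(z_i(k)-\rho^i(k))^2$ via $(a+b)^2\le 2a^2+2b^2$. You instead compare $X_i^2/n_i$ with $\tilde X_i^2/\tilde n_i$ directly. With threshold $c=25$ this yields $K_T(k)\le 4\widetilde K_T(k)+37M\log\frac{4M}{\delta}$, which implies the lemma.
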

Then, with a union bound across $k\in[K]$, with probability at least $1-\delta$, we have
\begin{align*}
	K_T = \sum^K_{k=1}K_T(k) \leq \sum^K_{k=1}\left(6\widetilde{K}_T(k) + 96 M \log \frac{4MK}{\delta}\right) = 6\widetilde{K}_T + 96 MK \log \frac{4MK}{\delta}.
\end{align*}

\subsection{Proof of \Cref{lmm:single-class-calibration-conversion}}

\lmmsingleclasscalibrationconversion*

While the proof of \Cref{lmm:single-class-calibration-conversion} follows almost exactly as the proof of Theorem 3 in \citet{luo2025simultaneous}, we include it here for completeness. 

\begin{proof}[Proof of \Cref{lmm:single-class-calibration-conversion}]
The proof relies on the following version of Freedman's inequality.

\begin{lemma}[\citealt{beygelzimer2011contextual}]
\label{lmm:freedman}
	Let $\{X_i\}^n_{i=1}$ be a martingale difference sequence adapted to the filtration $\mathcal{F}_1 \subseteq \cdots \subseteq \mathcal{F}_n$, where $\left|X_i\right| \leq B$ for all $i\in [n]$, and $B$ is a fixed constant. Define $\mathcal{V}:=\sum_{i=1}^n \mathbb{E}\left[X_i^2 \mid \mathcal{F}_{i-1}\right]$. Then, for any fixed $\mu \in\left[0, \frac{1}{B}\right], \delta \in[0,1]$, with probability at least $1-\delta$, we have
\begin{align*}
\left|\sum_{i=1}^n X_i\right| \leq \mu \mathcal{V}+\frac{\log \frac{2}{\delta}}{\mu} .
\end{align*}
\end{lemma}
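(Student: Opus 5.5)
We prove the one-sided bound $\sum_{i} X_i \le \mu\mathcal{V} + \frac{\log(2/\delta)}{\mu}$ with probability at least $1-\delta/2$ using an exponential supermartingale. The same argument applied to the martingale difference sequence $\{-X_i\}$, which has the same bound $B$ and the same $\mathcal{V}$, gives the other side. A union bound over the two events then yields the two-sided statement with probability at least $1-\delta$. The case $\mu=0$ is trivial, since the right-hand side is $+\infty$, so we assume $\mu\in(0,\frac1B]$.

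\textbf{Step 1 (scalar inequality).} We use the elementary bound $e^{x}\le 1+x+x^2$ for all $x\le 1$. One way to check it: $g(x)=1+x+x^2-e^x$ satisfies $g(0)=0$; moreover $g'(x)=1+2x-e^x$ is negative for $x<0$ (because $e^x\ge 1+x>1+2x$) and nonnegative on $[0,1]$ (because $g'$ is concave there with $g'(0)=0$ and $g'(1)=3-e>0$). Since $|X_i|\le B$ and $\mu\le 1/B$, we have $\mu X_i\le 1$. Taking conditional expectations and using $\mathbb{E}[X_i\mid\mathcal{F}_{i-1}]=0$ gives
\begin{align*}
\mathbb{E}\left[e^{\mu X_i}\mid \mathcal{F}_{i-1}\right] \le 1+\mu^2\,\mathbb{E}\left[X_i^2\mid\mathcal{F}_{i-1}\right] \le \exp\left(\mu^2\,\mathbb{E}\left[X_i^2\mid\mathcal{F}_{i-1}\right]\right).
\end{align*}

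\textbf{Step 2 (supermartingale).} Define $S_j=\sum_{i\le j}X_i$, $V_j=\sum_{i\le j}\mathbb{E}[X_i^2\mid\mathcal{F}_{i-1}]$, and $Z_j=\exp(\mu S_j-\mu^2 V_j)$, with $Z_0=1$. The increment $V_j-V_{j-1}$ is $\mathcal{F}_{j-1}$-measurable, so it can be pulled out of the conditional expectation. Step 1 then gives $\mathbb{E}[Z_j\mid\mathcal{F}_{j-1}]\le Z_{j-1}$. Iterating the tower property yields $\mathbb{E}[Z_n]\le 1$.

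\textbf{Step 3 (Markov).} By Markov's inequality,
\begin{align*}
\Pr\left[\mu S_n-\mu^2\mathcal{V}\ge \log\tfrac{2}{\delta}\right] = \Pr\left[Z_n\ge \tfrac{2}{\delta}\right] \le \tfrac{\delta}{2}.
\end{align*}
On the complementary event, dividing by $\mu>0$ gives $S_n\le \mu\mathcal{V}+\frac{\log(2/\delta)}{\mu}$. This is the one-sided bound, and the symmetrization and union bound described above finish the proof.

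The only step that needs any care is Step 1. The inequality $e^x\le 1+x+x^2$ must hold up to $x=1$, and this is exactly what forces the restriction $\mu\le 1/B$. It must also be used in the form $1+u\le e^u$ so that the conditional variances can be absorbed into the predictable compensator $\mu^2 V_j$. The remaining steps are routine.
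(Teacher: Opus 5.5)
Your proof is correct: the bound $e^x\le 1+x+x^2$ for $x\le 1$ (which is where $\mu\le 1/B$ is used), the supermartingale $\exp(\mu S_j-\mu^2 V_j)$, Markov's inequality, and a union bound over $\pm X_i$ together give exactly the stated two-sided inequality. The paper gives no proof of this lemma and only cites Beygelzimer et al.\ (2011); your argument is the standard exponential-supermartingale proof used there, which proves the one-sided version with the slightly sharper $e^x\le 1+x+(e-2)x^2$, and your union bound over $\pm X_i$ accounts for the $\log\frac{2}{\delta}$.
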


Fix $i \in [M]$ and define the martingale difference sequence $X_t \coloneqq y_t(k) (P_t(i)-\mathbf{1}\{p_t = z_i\})$ and $Y_t \coloneqq P_t(i) - \mathbf{1}\{p_t = z_i\}$. Observe that $|X_t|\leq 1$, $|Y_t|\leq 1$ for all $t$. Fix $\mu_i \in [0,1]$. Applying \Cref{lmm:freedman} to the sequence $X\coloneqq Y\coloneqq X_{1:T}$, $Y_{1:T}$ and taking a union bound over them, we obtain that with probability at least $1-\delta$,
\begin{align}
\label{eq:bound-X-Y-mu-V} 
\left|\sum_{t=1}^T y_t(k)\left(P_t(i)-\mathbf{1}\left\{p_t=z_i\right\}\right)\right| \leq \mu_i \calV_X+\frac{\log \frac{4}{\delta}}{\mu_i},~ \left|\sum_{t=1}^T P_t(i)-\mathbf{1}\left\{p_t=z_i\right\}\right| \leq \mu_i \calV_Y+\frac{\log \frac{4}{\delta}}{\mu_i},	
\end{align}
where $\calV_X$, $\calV_Y$ are given by 
\begin{align*}
& \mathcal{V}_X=\sum_{t=1}^T \mathbb{E}\left[X_t^2 \mid \mathcal{F}_{t-1}\right]=\sum_{t=1}^T y_t(k) \cdot P_{t}(i)\left(1-P_{t}(i)\right) \leq \sum_{t=1}^T P_{t}(i), \text { and } \\
& \mathcal{V}_Y=\sum_{t=1}^T \mathbb{E}\left[Y_t^2 \mid \mathcal{F}_{t-1}\right]=\sum_{t=1}^T P_{t}(i)\left(1-P_{t}(i)\right) \leq \sum_{t=1}^T P_{t}(i).	
\end{align*}
The upper tail $\rho^i(k)-\tilde{\rho}^i(k)$ can then be bounded in the following manner:

\begin{align*}
\rho^i(k)-\tilde{\rho}^i(k) & =\frac{\sum_{t=1}^T y_t(k) \mathbf{1}\left\{p_t=z_i\right\}}{\sum_{t=1}^T \mathbf{1}\{p_t = z_i\}}-\frac{\sum_{t=1}^T y_t(k) P_t(i)}{\sum_{t=1}^T P_t(i)} \\
& \overset{(a)}{\leq} \frac{\sum_{t=1}^T y_t(k) \mathbf{1}\{p_t = z_i\}}{\sum_{t=1}^T \mathbf{1}\{p_t = z_i\}}+\frac{\mu_i \sum_{t=1}^T P_t(i)+\frac{\log \frac{4}{\delta}}{\mu_i}-\sum_{t=1}^T y_t(k) \mathbf{1}\{p_t = z_i\}}{\sum_{t=1}^T P_t(i)} \\
& =\frac{\sum_{t=1}^T y_t(k)\mathbf{1}\{p_t = z_i\}}{\left(\sum_{t=1}^T \mathbf{1}\{p_t = z_i\}\right)\left(\sum_{t=1}^T P_t(i)\right)} \cdot\left(\sum_{t=1}^T P_t(i)-\sum^T_{t=1}\mathbf{1}\{p_t = z_i\}\right) \\
&\phantom{=\quad}+\frac{\mu_i \sum_{t=1}^T P_t(i)+\frac{\log \frac{4}{\delta}}{\mu_i}}{\sum_{t=1}^T P_t(i)} \\
& \overset{(b)}{\leq} \frac{\sum_{t=1}^T y_t(k) \mathbf{1}\{p_t = z_i\}}{\left(\sum_{t=1}^T \mathbf{1}\{p_t = z_i\}\right)\left(\sum_{t=1}^T P_t(i)\right)} \cdot\left(\mu_i \sum_{t=1}^T P_t(i)+\frac{\log \frac{4}{\delta}}{\mu_i}\right)\\
& \phantom{=\quad}+\frac{\mu_i \sum_{t=1}^T P_t(i)+\frac{\log \frac{4}{\delta}}{\mu_i}}{\sum_{t=1}^T P_t(i)} \\
& \overset{(c)}{\leq} 2 \mu_i+\frac{2 \log \frac{4}{\delta}}{\mu_i \sum_{t=1}^T P_t(i)},
\end{align*}
where (a) and (b) follow from \eqref{eq:bound-X-Y-mu-V}, and (c) follows by $y_t(k)\mathbf{1}\{p_t = z_i\} \leq \mathbf{1}\{p_t = z_i\}$. The lower tail can be bounded in an exact manner as
\begin{align*}
\tilde{\rho}^i(k)-\rho^i(k) & =\frac{\sum_{t=1}^T y_t(k) P_t(i)}{\sum_{t=1}^T P_t(i)}-\frac{\sum_{t=1}^T y_t(k) \mathbf{1}\{p_t = z_i\}}{\sum_{t=1}^T \mathbf{1}\{p_t = z_i\}} \\
& {\leq} \frac{\sum_{t=1}^T y_t(k) \mathbf{1}\{p_t = z_i\}+\mu_i \sum_{t=1}^T P_t(i)+\frac{\log \frac{4}{\delta}}{\mu_i}}{\sum_{t=1}^T P_t(i)}-\frac{\sum_{t=1}^T y_t(k) \mathbf{1}\{p_t = z_i\}}{\sum_{t=1}^T \mathbf{1}\{p_t = z_i\}} \\
& =\frac{\sum_{t=1}^T y_t(k) \mathbf{1}\{p_t = z_i\}}{\left(\sum_{t=1}^T P_t(i)\right)\left(\sum_{t=1}^T \mathbf{1}\{p_t = z_i\}\right)} \cdot\left(\sum_{t=1}^T \mathbf{1}\{p_t = z_i\}-\sum^T_{t=1}P_t(i)\right)\\
& \phantom{=\quad} +\frac{\mu_i \sum_{t=1}^T P_t(i)+\frac{\log \frac{4}{\delta}}{\mu_i}}{\sum_{t=1}^T P_t(i)} \\
& {\leq} \frac{\sum_{t=1}^T y_t(k) \mathbf{1}\{p_t = z_i\}}{\left(\sum_{t=1}^T \mathbf{1}\{p_t = z_i\}\right)\left(\sum_{t=1}^T P_t(i)\right)} \cdot\left(\mu_i \sum_{t=1}^T P_t(i)+\frac{\log \frac{4}{\delta}}{\mu_i}\right)\\
& \phantom{=\quad}+\frac{\mu_i \sum_{t=1}^T P_t(i)+\frac{\log \frac{4}{\delta}}{\mu_i}}{\sum_{t=1}^T P_t(i)} \\
& \leq 2 \mu_i+\frac{2 \log \frac{4}{\delta}}{\mu_i \sum_{t=1}^T P_t(i)} .	
\end{align*}

Combining both bounds, we have shown that for a fixed $\mu_i\in[0,1]$, $|\rho^i(k)-\tilde{\rho}^i(k)|\leq 2\mu_i + \frac{\log \frac{4}{\delta}}{\mu_i \sum^T_{t=1}P_t(i)}$ holds with probability at least $1-\delta$. Taking a union bound over all $i$, with probability $1-\delta$, the following holds simultaneously for all $i$,
\begin{align}
\left|\sum_{t=1}^T y_t(k)\left(P_{t}(i)-\mathbf{1}\{p_t = z_i\}\right)\right| & \leq \mu_i \sum_{t=1}^T P_{t}(i)+\frac{\log \frac{4M}{\delta}}{\mu_i},\\
\left|\sum_{t=1}^T P_{t}(i)-\mathbf{1}\{p_t = z_i\}\right| & \leq \mu_i \sum_{t=1}^T P_{t}(i)+\frac{\log \frac{4M}{\delta}}{\mu_i}, \label{eq:bound-Y-mu} \\
\left|\rho^i(k)-\tilde{\rho}^i(k)\right| & \leq 2 \mu_i+\frac{2 \log \frac{4M}{\delta}}{\mu_i \sum_{t=1}^T P_{t}(i)}. 	\label{eq:bound-rho-mu}
\end{align}
Consider the function $g(\mu):=\mu+\frac{a}{\mu}$, where $a \geq 0$ is a fixed constant. Clearly, $\min _{\mu \in[0,1]} g(\mu)=2 \sqrt{a}$ when $a \leq 1$, and $1+a$ otherwise. Minimizing the bound in \eqref{eq:bound-rho-mu} with respect to $\mu_i$, we obtain
\begin{align*}
\left|\rho^i(k)-\tilde{\rho}^i(k)\right| \leq \begin{cases} 4 \sqrt{\frac{\log \frac{4M}{\delta}}{\sum_{t=1}^T P_{t}(i)}}, \text { when } \log \frac{4M}{\delta} \leq \sum_{t=1}^T P_{t}(i),\\
   	2+\frac{2 \log \frac{4M}{\delta}}{\sum_{t=1}^T P_{t}(i)}, \text{ when } \log \frac{4M}{\delta}>\sum_{t=1}^T P_{t}(i). 
 \end{cases}
\end{align*}

Therefore, when $\sum_{t=1}^T P_{t}(i)$ is tiny, which is possible if algorithm $A$ does not allocate enough probability mass to the index $i$, the bound obtained is large making it much worse than the trivial bound $\left|\rho^i(k)-\tilde{\rho}^i(k)\right| \leq 1$ which follows since $\rho^i(k), \tilde{\rho}^i(k) \in[0,1]$ by definition. Based on this reasoning, we define the set
\begin{align}
\label{eq:def-calI}
	\calI:=\left\{i \in [M], \text { s.t. } \log \frac{4M}{\delta} \leq \sum_{t=1}^T P_{t}(i)\right\},
\end{align}
and let $\bar{\calI} \coloneqq [M]\setminus \calI$. We bound $(\rho^i(k)-\tilde{\rho}^i(k))^2$ as
\begin{align}
\label{eq:final-bound-rho}
\left(\rho^i(k)-\tilde{\rho}^i(k)\right)^2 \leq \begin{cases}\frac{16 \log \frac{4M}{\delta}}{\sum_{t=1}^T P_{t}(i)} & \text { if } i \in \calI, \\ 1 & \text { otherwise. }\end{cases}
\end{align}

Similarly, $\left|\sum_{t=1}^T P_{t}(i)-\mathbf{1}\{p_t = z_i\}\right|$ can be bounded by substituting the optimal $\mu_i$ obtained above in \eqref{eq:bound-Y-mu}; we obtain
\begin{align}
\label{eq:final-bound-Y}
	\left|\sum_{t=1}^T P_{t}(i)-\mathbf{1}\{p_t = z_i\}\right| \leq \begin{cases}2 \sqrt{\log \frac{4M}{\delta} \sum_{t=1}^T P_{t}(i)} & \text { if } i \in \calI \\ \sum_{t=1}^T P_{t}(i)+\log \frac{4M}{\delta} & \text { otherwise. }\end{cases}
\end{align}

Equipped with \eqref{eq:final-bound-rho} and \eqref{eq:final-bound-Y}, we proceed to bound $K_T(k)$ in the following manner:
\begin{align*}
K_T(k) &=\sum_{i \in [M]} \sum_{t=1}^T \mathbf{1}\{p_t = z_i\}\left(z_i(k)-\rho^i(k)\right)^2 \\
&\leq 2 \sum_{i\in [M]} \sum_{t=1}^T \mathbf{1}\{p_t = z_i\}\left(\left(z_i(k)-\tilde{\rho}^i(k)\right)^2+\left(\rho^i(k)-\tilde{\rho}^i(k)\right)^2\right),	
\end{align*}
where the inequality holds because $(a+b)^2 \leq 2 a^2+2 b^2$ for all $a, b \in \mathbb{R}$. To further bound the term above, we split the summation into two terms $\calT_1, \calT_2$ defined as
\begin{align*}
	\calT_1 & :=\sum_{i \in \calI} \sum_{t=1}^T \mathbf{1}\{p_t = z_i\}\left(\left(z_i(k)-\tilde{\rho}^i(k)\right)^2+\left(\rho^i(k)-\tilde{\rho}^i(k)\right)^2\right), \\
\calT_2 & =\sum_{i \in \bar{\calI}} \sum_{t=1}^T \mathbf{1}\{p_t = z_i\}\left(\left(z_i(k)-\tilde{\rho}^i(k)\right)^2+\left(\rho^i(k)-\tilde{\rho}^i(k)\right)^2\right),
\end{align*}
and bound $\calT_1$ and $\calT_2$ individually. We bound $\calT_1$ as
\begin{align*}
	\calT_1 & \overset{(a)}{\leq} \sum_{i \in \calI}\left(\sum_{t=1}^T P_{t}(i)+2 \sqrt{\log \frac{4M}{\delta} \sum_{\tau=1}^T P_{\tau} (i)}\right)\left(\left(z_i(k)-\tilde{\rho}^i(k)\right)^2+\frac{16 \log \frac{4M}{\delta}}{\sum_{\tau=1}^T P_{\tau} (i)}\right) \\
& =\sum_{i \in \calI} \sum_{t=1}^T P_{t}(i)\left(z_i(k)-\tilde{\rho}^i(k)\right)^2+16 \log \frac{4M}{\delta}|\calI|+ \\
& \phantom{=\quad} 2 \sum_{i \in \calI} \sqrt{\log \frac{4M}{\delta} \sum_{\tau=1}^T P_{\tau} (i)}\left(\left(z_i(k)-\tilde{\rho}^i(k)\right)^2+\frac{16 \log \frac{4M}{\delta}}{\sum_{\tau=1}^T P_{\tau} (i)}\right) \\
& \overset{(b)}{\leq} \sum_{i \in \calI} \sum_{t=1}^T P_{t}(i)\left(z_i(k)-\tilde{\rho}^i(k)\right)^2+16 \log \frac{4M}{\delta}|\calI|+ \\
& \phantom{=\quad}2 \sum_{i \in \calI} \sum_{\tau=1}^T P_{\tau} (i)\left(\left(z_i(k)-\tilde{\rho}^i(k)\right)^2+\frac{16 \log \frac{4M}{\delta}}{\sum_{\tau=1}^T P_{\tau} (i)}\right) \\
& =3 \sum_{i \in \calI} \sum_{t=1}^T P_{t}(i)\left(z_i(k)-\tilde{\rho}^i(k)\right)^2+48 \log \frac{4M}{\delta}|\calI|
\end{align*}
where (a) follows by substituting the bounds from \eqref{eq:final-bound-rho} and \eqref{eq:final-bound-Y}; while (b) follows since by the definition of $\calI$ in \eqref{eq:def-calI}, we have $\sqrt{\log \frac{4M}{\delta} \sum_{\tau=1}^T P_{\tau} (i)} \leq \sum_{\tau=1}^T P_{\tau} (i)$. Next, we bound $\calT_2$ as
\begin{align*}
	\calT_2 & \overset{(a)}{\leq} \sum_{i \in \overline{\calI}}\left(2 \sum_{t=1}^T P_{t}(i)+\log \frac{4M}{\delta}\right)\left(\left(z_i(k)-\tilde{\rho}^i(k)\right)^2+1\right) \\
& \overset{(b)}{\leq} 2 \sum_{i \in \overline{\calI}} \sum_{t=1}^T P_{t}(i)\left(z_i(k)-\tilde{\rho}^i(k)\right)^2+2 \sum_{i \in \overline{\calI}} \sum_{t=1}^T P_{t}(i)+2 \log \frac{4M}{\delta}|\overline{\calI}| \\
& \overset{(c)}{\leq} 2 \sum_{i \in \overline{\calI}} \sum_{t=1}^T P_{t}(i)\left(z_i(k)-\tilde{\rho}^i(k)\right)^2+4 \log \frac{4M}{\delta}|\overline{\calI}|
\end{align*}
where (a) follows by substituting the bounds from \eqref{eq:final-bound-rho} and \eqref{eq:final-bound-Y}; (b) follows by bounding $(z_i(k)-\tilde{\rho}^i(k))^2 \leq 1$; and (c) follows from the definition of $\calI$ in \eqref{eq:def-calI}. Collecting the bounds on $\calT_1$ and $\calT_2$, we obtain
\begin{align*}
	\calT_1+\calT_2 & \leq 3 \sum_{i\in [M]} \sum_{t=1}^T P_{t}(i)\left(z_i(k)-\tilde{\rho}^i(k)\right)^2+48 \log \frac{4M}{\delta}|\calI|+4 \log \frac{4M}{\delta}|\overline{\calI}| \\
& \leq 3 \widetilde{K}_T(k)+48M \log \frac{4M}{\delta},
\end{align*}
where the last inequality follows from the definition of $\widetilde{K}_T(k)$ and since $|\calI|+|\overline{\calI}|=M$. Since $K_T(k) \leq 2\left(\calT_1+\calT_2\right)$, we have shown that
\begin{align*}
K_T(k) \leq 6 \widetilde{K}_T(k) +96M \log \frac{4M}{\delta},	
\end{align*}
with probability at least $1-\delta$. This completes the proof.
\end{proof}

\printbibliography

\end{document}